\newtheorem{proposition}{Proposition}
\newcounter{sidebar}
\definecolor{vertclair}{rgb}{0.8,0.95,0.8}
\newsavebox{\SidebarBox}
\newcommand{\tens}[1]{\boldsymbol{\mathcal{#1}}}
\newcommand{\tT}{\tens{T}}
\newcommand{\matr}[1]{\boldsymbol{#1}}
\newcommand{\mA}{\matr{A}}
\newcommand{\mB}{\matr{B}}
\newcommand{\mC}{\matr{C}}
\newcommand{\mD}{\matr{D}}
\newcommand{\mS}{\matr{S}}
\newcommand{\mM}{\matr{M}}
\newcommand{\mP}{\matr{P}}
\newcommand{\mT}{\matr{T}}
\newcommand{\mX}{\matr{X}}
\newcommand{\mU}{\matr{U}}
\newcommand{\mV}{\matr{V}}
\newcommand{\mQ}{\matr{Q}}
\newcommand{\vect}[1]{\boldsymbol{#1}}
\newcommand{\tensp}{\mathop{\otimes}}      
\newcommand{\kr}{\odot}     
\newcommand{\hadam}{\boxdot}    
\newcommand{\T}{{\sf T}}        
\newcommand{\rank}[1]{\mathop{\operator@font rank}\{#1\}}
\newcommand{\colrank}[1]{\mathop{\operator@font colrank}\{#1\}}
\newcommand{\krank}[1]{\mathop{\operator@font krank}\{#1\}}
\newcommand{\trace}[1]{\mathop{\operator@font trace}\{#1\}}
\newcommand{\Diag}[1]{\mathop{\operator@font Diag}\{#1\}}    
\newcommand{\diag}[1]{\mathop{\operator@font diag}\{#1\}}    
\newcommand{\Span}[1]{\mathop{\operator@font Span}\{#1\}}    
\newcommand{\argmin}{\mathop{\operator@font argmin}}
\definecolor{purple}{rgb}{0.6,0,0.7}
\definecolor{Purple}{rgb}{0.7,0,0.4}
\definecolor{burgundy}{RGB}{159,29,53}
\definecolor{arylideyellow}{rgb}{0.91, 0.84, 0.42}
\definecolor{bananayellow}{rgb}{1.0, 0.88, 0.21}
\definecolor{gris25}{gray}{0.90}
\definecolor{bordure}{rgb}{0.09,0.17,0.68}
\definecolor{darkred}{rgb}{0.6,0,0}
\definecolor{darkgreen}{rgb}{0.0,0.7,0}
\renewcommand{\maketag@@@}[1]{\hbox{\m@th\normalsize\normalfont#1}}%
\begin{document}

\title{Dictionary-based Tensor Canonical Polyadic Decomposition}

\author{~J\'er\'emy~Emile~Cohen*%
		 ~and~Nicolas~Gillis
\thanks{The authors acknowledge the support by the F.R.S.-FNRS (incentive grant for scientific 
research n$^\text{o}$ F.4501.16). NG also acknowledges the support by the ERC (starting grant n$^\text{o}$ 
679515). 

University of Mons 7000 Mons, Belgium (e-mail: firstname.lastname@umons.ac.be). 

Manuscript received XX, 2017; revised  XX, 2017.} 
}

\maketitle \begin{abstract} To ensure interpretability of extracted sources in
    tensor decomposition, we introduce in this paper a dictionary-based tensor
    canonical polyadic decomposition which enforces one factor to belong
    exactly to a known dictionary. A new formulation of sparse coding is
    proposed which enables high dimensional tensors dictionary-based canonical
    polyadic decomposition. The benefits of using a dictionary in tensor
    decomposition models are explored both in terms of parameter
    identifiability and estimation accuracy.  Performances of the proposed
    algorithms are evaluated on the
    decomposition of simulated data and the unmixing of hyperspectral
    images.   \end{abstract}
\begin{IEEEkeywords} tensor, multiway analysis, sparse coding, constrained
optimization, spectral unmixing.  \end{IEEEkeywords}

\IEEEpeerreviewmaketitle

\section{Introduction}

Given a mixture of several components, a classic problem in signal processing is to separate the contribution of each of these components using solely the information contained in the data. This problem is known as blind source separation and has been a particularly widely studied topic for the last two decades~\cite{comon2010handbook}. 
When the available data are contained in a multiway array, that is, a table of three entries or more, blind source separation has been successfully achieved by means of tensor decomposition techniques in a large variety of applications, ranging from 
telecommunications~\cite{SidiBG00:ieeesp} to chemometrics~\cite{Bro1998}, spectral unmixing~\cite{veganzones2016nonnegative}, neuroimaging~\cite{cichocki2015tensor}, social sciences~\cite{kroonenberg1983three} and machine learning~\cite{papalexakis2016tensors}.

The key concept behind tensor decomposition methods is the linearity of blocks of parameters of interest such as spectra, concentrations or time signatures with respect to experimental parameters such as wavelength, pixel index, subject index or time. 
For the canonical polyadic decomposition (CPD) model studied in this paper, each block of parameters of interest depends respectively on only one experimental parameter.

As an illustration, tensor decomposition techniques can be used to perform source separation with hyperspectral images, a task often referred to as spectral unmixing. Hyperspectral images are 2D images collected for a large number of wavelengths, and possibly along time. In this scenario, the blocks of parameters of interest are spectral signatures (characteristic responses of materials to light stimulation), abundances (relative concentrations) and time evolution. 

For most applications, tensor decomposition models are however not exactly
 following a  physical models because of their strong 
multilinearity  assumptions. As a consequence, modeling error has to be accounted for when processing the results of the source separation using tensors. This matters when the goal of the source separation is to identify the components in the mixture. The estimated parameters of interest have to be compared with benchmarks, and modeling error induces error on the estimated parameters which may in turn deteriorate identification performances. 

In this paper, we want to avoid splitting the identification procedure and the source separation procedure. It is shown that using a formalism inspired from sparse coding~\cite{olshausen1997sparse}, merging source separation and identification is not only possible but also offers advantages in term of uniqueness properties of the tensor decomposition, and may reduce estimation error on identified factors provided the \textit{a priori} information is accurate.

\section*{Outline and contributions}
In this work, after discussing notations and vocabulary, we make the following contributions:
\begin{itemize}

\item Formalize high order tensor sparse coding by modifying the usual sparse coding formulation for matrices \cite{olshausen1997sparse,elhamifar2012see}. We call the obtained tensor model dictionary canonical polyadic decomposition (DCPD). 

\item Provide tools for introducing flexibility in the DCPD formulation. This makes the DCPD model more suitable for practical problems.

\item Study the identifiability of parameters of the DCPD for matrices and higher order tensors, and study the existence of a best low rank DCPD approximation.

\item Develop greedy and continuous algorithms to compute the DCPD and its flexible variants.

\item Check the identification performances of the DCPD model on synthetic data with respect to the CPD model. We also use DCPD in the matrix case to perform spectral unmixing under pure-pixel assumption of the Urban and Terrain data sets\footnote{available at \url{http://www.agc.army.mil/}} and compare our results 
with state-of-the-art methods \cite{G14b,ND05,KSK12,GV14}. 

\end{itemize}


\section*{Notation and Vocabulary} 
Among various notation habits in the multiway array processing community, we choose to follow notation from \cite{Hack12,cohen2015notations}, as presented in Tables \ref{table1} and \ref{table2}. 
Although our results will be applicable to tensors of any order, we focus in
this paper on third order tensor in order to simplify the presentation. We call
a \textbf{third-order} real $K\times L \times M$ \textbf{tensor} $\tens{T}$ a
vector from a tensor space $(\mathds{R}^K \tensp \mathds{R}^L \tensp
\mathds{R}^M,\tensp)$ with $\tensp$ being a  tensor product.  A \textbf{three way array} is an element from $(\mathds{R}^{K\times L\times M},\tensp)$ where $\tensp$ is the outer product. In other words, the set of arrays is a tensor space when the outer product is used as the tensor product. A higher order tensor is a vector from a tensor space featuring at least three linear subspaces. 

In Table \ref{table2}, some useful properties of multilinear operators acting
on tensors are specified. Multilinear operators generalize linear operators
acting on vectors by defining linear operations on each vector space
$\mathds{R}^N$ composing the tensor space. These multilinear operators also
form a tensor space $(\mathds{R}^{R_1\times K} \tensp_{op}\mathds{R}^{R_2\times
K} \tensp_{op}\mathds{R}^{R_3\times K},\tensp_{op})$ where $\tensp_{op}$ will
be abusively denoted $\tensp$ although it is not the outer product. 
Indeed,  $\tensp_{op}$ is defined by $(\mU\tensp_{op}\mV )(\vect{a}\tensp\vect{b}) = \mU\vect{a}\tensp\mV\vect{b}$.

Given a $K \times L \times M$ tensor $\tens{T}$, its \textbf{canonical polyadic decomposition} (CPD) of rank $R$ can be written as follows: 
\begin{equation}
\tens{T} = \sum\limits_{r=1}^R \tens{D}_r, \label{CPD2}
\end{equation}
where $\tens{D}_r$ are decomposable tensors of the form $\tens{D}_r=\vect{a}_r \tensp \vect{b}_r \tensp \vect{c}_r$. The \textbf{rank} of $\tens{T}$ is the minimal value of $R$ such that (\ref{CPD2}) holds exactly, while we define the rank of a CPD model as the number of component $R$ in that model. A CPD model or a tensor is said to be a \textbf{low-rank model} or a low-rank tensor if $R$ is small with respect to all dimensions of the data or tensor.

Finding the CPD of a third-order tensor means finding rank-one tensors $\tens{D}_r$.
Yet, each tensor $\tens{D}_r$ may be defined by three vectors 
 $\vect{a}_r$, $\vect{b}_r$ and $\vect{c}_r$, only up to two scaling ambiguities; in fact, 
$\vect{a}_r \tensp \vect{b}_r \tensp \vect{c}_r = \alpha\vect{a}_r \tensp \beta\vect{b}_r \tensp \vect{c}_r/\alpha\beta$, $\forall \alpha, \beta\neq0$.

Next, it is often convenient to store these vectors in matrices called \textbf{factors} as $\matr{A}=[\vect{a}_1, \dots, \vect{a}_R]$, $\matr{B}=[\vect{b}_1, \dots, \vect{b}_R]$ and $\matr{C}=[\vect{c}_1, \dots, \vect{c}_R]$. This leads to a convenient  writing:
\begin{equation}
\tens{T} = \left( \matr{A} \tensp \matr{B} \tensp \matr{C} \right) \tens{I}_R, 
\label{CPD}
\end{equation}
where $\matr{A} \in \mathds{R}^{K\times R}$, $\matr{B} \in \mathds{R}^{L\times R}$ and $\matr{C} \in \mathds{R}^{M\times R}$ are called factor matrices, and $\tens{I}_R = \sum\limits_{r=1}^{R}{\vect{e}_i\tensp\vect{e}_i\tensp\vect{e}_i}$ belongs to $\mathds{R}^{R\times R\times R}$ with $\vect{e}_i$ a canonical basis vector of $\mathds{R}^R$, that is, $\tens{I}_R$ is a diagonal core tensor with only ones on the diagonal. A convenient interpretation of \eqref{CPD} is to see it as a change of basis. Indeed, \eqref{CPD} means that the vector $\tens{T}$ is expressed by coefficients $\tens{I}_R$ in the image of multilinear operator $\matr{A} \tensp \matr{B} \tensp \matr{C} $, with linear operators $\matr{A}$, $\matr{B}$ and $\matr{C}$ spanning respectively the first, second and third \textbf{mode} of $\tT$. Note that model \eqref{CPD} now contains $2R$ scaling indeterminacies whereas definition \eqref{CPD2} did not contain any.

Conditions on the dimensions of the tensor and rank of the decomposition are
given in the literature
\cite{kruskal1977three,domanov2013uniqueness,landsberg2013equations} to ensure
uniqueness of the factors in an unconstrained CP model, but only when noise is
absent. When these conditions are satisfied, the factors in the CPD model are
unique up to permutation ambiguity and  the  scaling described above. The model is then said to be \textbf{identifiable}. For higher order tensors, that is, arrays of three ways and more, these conditions are mild.


\begin{table}[h!]
\begin{framed}\begin{minipage}{.95\linewidth}~\vspace{-2ex} 
\begin{align*}
\mathcal{E} \tensp \mathcal{F} &  \text{: tensor product space, linear space mapped by } \tensp \text{ from } \mathcal{E} \times \mathcal{F}.  \\
\vect{a} \tensp \vect{b} ~ &  \text{: tensor product of two vectors, \emph{i.e.} an element of $\mathcal{E} \tensp \mathcal{F}$,} \\
& \text{  ~~can be understood as an outer product of vectors} \\ 
& \text{  ~~if the tensor space is an array space. \cite{Hack12} } \\
\matr{A} \kr \matr{B}  & \text{: Khatri-Rao (columnwise Kronecker) product of matrices \cite{Rao65}.}\\
\matr{A} \hadam \matr{B}  & \text{: Hadamard product of matrices, \emph{i.e.}, element-wise product \cite{Rao65}. }
\end{align*}
\end{minipage} \end{framed}
\caption{Basic definitions from linear algebra \vspace*{-1em}}
\label{table1}
\end{table}

Matrices are a particular case of tensors with only two modes. One of the main differences of the CPD for matrices compared with the CPD of higher order tensors is that it is never identifiable without additional constraints as soon as $R>1$. Note that, since the two modes of a matrix are the column space and the row space, the CPD of a matrix can be written as $\mM=\mA\mB^T$.

 A difficult problem not adressed in this manuscript is finding the rank of a
tensor, in both an exact decomposition and an approximate decomposition
scenario. A naive approach is to look at the singular value profiles of the
unfoldings, but an interested reader may refer to \cite{bro2003new,da2008robust} and references
therein. 

A tool that needs to be introduced here is the unfolding of three-way arrays. By choosing three particular ways to cut the cube of data into slices and stacking the obtained matrices, it is possible to rewrite the CPD in a matrix format. We chose the unfolding defined in \cite{cohen2015notations}, yielding the following matrix format CPD : 
\begin{equation}
\begin{array}{l}
\matr{T}_1 = \mA\left(\mB\kr\mC\right)^T,  \\
\matr{T}_2 = \mB\left(\mA\kr\mC\right)^T,  \\
\matr{T}_3 = \mC\left(\mA\kr\mB\right)^T. 
\end{array}
\end{equation}
Unfolding $\matr{T}_i$ contains all the vectors along the mode $i$. For instance, $\matr{T}_1$ contains all the $L\times M$ columns of $\tT$, thus its column space is the subspace spanned on the first mode by $\tT$, that is, the span of columns of $\mA$.


\begin{table}
\begin{framed} \begin{minipage}{.95\linewidth}~\vspace{-2ex}  
\begin{align*}
\matr{U} \tensp \matr{V} \tensp \matr{W}~~~~ &  \text{ : an operator acting on a third order tensor.} \\
\left( \matr{U} \tensp \matr{V} \tensp \matr{W} \right) \tens{T} &  \text{ :
application of } \matr{U} \text{ on the first mode, } \matr{V} \text{ on the}
\\   \text{second mode, } \matr{W} & \text{ on the third mode, also noted } \tens{T}\bullet_1 \matr{U} \bullet_2 \matr{V} \bullet_3 \matr{W}. 
\end{align*} \end{minipage} \end{framed}
\caption{Some definitions and properties of multilinear operators \vspace*{-2em}}
\label{table2}
\end{table}

\section{Dictionnary CPD : Model Definition}\label{sec-model}

An important property of low-rank models is that the computed factors bear physical meaning. Factors are interpretable when the multilinear relationship between the experimental parameters and the block of parameters of interest stems from a meaningful modeling, and when the model is identifiable. 

The first condition is met in many applications of low-rank models, for instance in fluorescence spectroscopy where spectra and concentrations depend linearly, as a first-order approximation, on the emission wavelength, the excitation wavelength and the mixture index~\cite{valeur2012molecular}. When mining a collection of text documents, low-rank factorization techniques can identify characteristic words and documents~\cite{lee1999learning}. 

The identifiability condition may however not be verified in practice. For matrices, it is indeed well known that identifiability is never achieved without imposing additional constraints for $R$ greater than one, since 
\begin{equation}
\mM=\mA\mB^T = \mA\mP\mP^{-1}\mB^T
\label{matrix_uni}
\end{equation}  
for any invertible matrix $\mP$. In the higher-order case, conditions on the rank of the model and the dimensions of the data have been reported in the literature with some variations, but are mild enough to ensure identifiability of the CPD in most applications~\cite{kruskal1977three,domanov2013uniqueness,landsberg2013equations}. 

If the factors are meaningful, it may be of crucial importance to identify them. In the example of fluorescence spectroscopy, once emission and excitation spectra have been extracted from the data using a CPD model, the final step is to recognize which chemical compound is present in the mixtures by matching the extracted spectra with known spectra of known chemicals. The goal of the models presented below is to merge this identification step with the source separation procedure using the \textit{a priori} information available on the factors.

\subsection{Dictionary-based CPD}

Formally, assume the following relationship between a matrix $\mD \in \mathds{R}^{L\times d}$ called the (over-complete) dictionary and factor $\mB$:
\begin{equation}
    \mB = \mD\mS, \quad ~  \|\vect{s}_i\|_{0} = 1 \text{ for } i\in \{1,\dots,R\},
\label{dicoB}
\end{equation}
where $\matr{S}\in\{0,1\}^{d\times R}$ is a binary matrix, 
$\|\vect{s}_i\|_{0}$ 
counts the number of non-zero values in the $i^{th}$ column of $\mS$, and $d$
is much larger than $R$. Here $\mS$ has exactly one $1$ in each column and is a
selection matrix that identifies, among the $d$ atoms of the dictionary, $R$
atoms present in  factor  $\mB$. 
The chosen columns of $\mD$ that constitute $\mB$ are given by the row index of all the ones in $\mS$. A priori, no restrictions on the atoms are assumed although some technical conditions to ensure identifiability in particular settings are discussed in Section~\ref{identif:sec}.

Viewing \eqref{dicoB} as a parameterization of $\mB$, a \textbf{dictionary CPD model (DCPD)} can be written as follows:
\begin{equation}
\left\{
\begin{array}{l}
\tT =  \left( \mA \tensp \mD\mS \tensp \mC \right) \tens{I}_R + \tens{E},  \\
 \|\vect{s}_i\|_{0} = 1 \text{ for } i\in \{1,\dots,R\},  ~ \mS\in\{0,1\}^{d\times R} . 
\end{array}\right.
\label{DCPD}
\end{equation}
In this model, the parameters are $\mA$, $\mS$ and $\mC$. 
Tensor $\tens{E}$ represent the noise, and typically  its entries  follow i.i.d.\@ known distributions such as a zero-mean Gaussian distribution. In that case, it is possible to derive the maximum likelihood estimator of parameters $\mA$, $\mS$ and $\mC$:
\begin{equation}
\begin{array}{l}
\underset{\mA,\mS,\mC}{\text{argmin}} \, \| \tT - \left( \mA \tensp \mD\mS \tensp \mC \right) \tens{I}_R \|_F^2 \\
\text{such that }  \|\vect{s}_i\|_{0} = 1 \text{ for } i\in \{1,\dots,R\}  \text{ and } \mS\in\{0,1\}^{d\times R}. 
\end{array}
\label{lulu} 
\end{equation} 
Of course, the DCPD does not apply in any situation where the CPD works. It is far from obvious that for a given application, a dictionary containing exactly all the factors on one mode is available. Even if such a dictionary is available, it is likely that some variability has to be accounted for between the atoms and the factors. Below, we try to tackle respectively cases where the relation \eqref{dicoB} should not be exact, and cases where no dictionary is available.

\subsection{Accounting for Variability} 

In spectral unmixing, two different approaches have been studied in the
literature: the fully blind unmixing and the semi-blind approach, using already
known spectra to compute regression \cite{Jose12}. However in this particular application, it is well known that the same material emits a slightly different spectrum depending on additional parameters not accounted for in a low-rank model. These variations are called spectral variability and are a major issue to using both blind unmixing and semi-blind unmixing~\cite{zare2014endmember}. Learning from this example, we emphasize that a naive DCPD as introduced above in~\eqref{DCPD} may be viewed as unrealistic. 

Therefore, it may be necessary to introduce flexibility in the DCPD model. A first way is to generalize the relationship between factor $\mB$ and the dictionary $\mD$:
\begin{equation}
\mB = f(\mD\mS,\theta), 
\label{flexB}
\end{equation}
for some function $f(x,\theta)$ mapping to $\mathds{R}^{L\times R}$ and where
$\theta$ is a random variable following some known probability $p(\theta)$. A
simple instance of \eqref{flexB} is obtained by setting $f(x,\theta)=x+\theta$
and $p(\theta)$ is a Gaussian distribution of zero mean and known white
covariance $\sigma_c\matr{I}_K\tensp\matr{I}_R$. Then \eqref{flexB} is a noisy
version of \eqref{dicoB} and  interpreting \eqref{flexB} as \textit{a priori}
information on $\mB$,  a maximum \textit{a posteriori} estimator yields  (MAP derivation similar to \cite{CabrCC16:tsp}):
\begin{equation}
\begin{array}{l}
\underset{\mA,\mB,\mS,\mC}{\text{argmin}} \, \| \tT - \left( \mA \tensp \mB \tensp \mC \right) \tens{I}_R \|_F^2 + \frac{1}{\sigma_c^2}\|\mB-\mD\mS\|_F^2 \\
\text{such that }  \|\vect{s}_i\|_{0} = 1 \text{ for } i\in \{1,\dots,R\}  \text{ and } \mS\in\{0,1\}^{d\times R}.  \label{flexlulu} 
\end{array} 
\end{equation} 
In the case where atoms can be grouped with sufficient group population, the
user may want to identify the columns of factor $\mB$ with a group of atoms
rather than with a single element of $\mD$. This is the method used in
\cite{guo2012comparison} for hyperspectral imaging, where the dictionary
contains multiple spectra for each class of mineral. In this scenario, a
solution is to cluster the available atoms and use the centroids as columns of
$\mD$ along with inter-class covariance. Then comparing $\mB$ to multiple
classes of atoms with known averages and covariances amounts to penalizing using the Mahalanobis distance \cite{guo2012comparison} in \eqref{flexlulu}.

\paragraph*{Dictionary-based PARAFAC2} There are other ways to account for
discrepancy between the dictionary and the true phenomena underlying the data.
In particular, it is not unreasonable that this discrepancy depends on the
other experimental parameters (that is, the first or third mode in our case).
For instance, in hyperspectral imaging, a modification of the geometry of the
ground due to weather conditions can modify the spectra. If the data is
collected along time, then the spectrum of a particular material can relate to
one atom in the dictionary but with a time-dependent discrepancy. Since
modeling variability is a topic-dependent task, we only provide one instance of
a modified DCPD that handles this task, to serve as a guide for further works. 

A well-known modified CPD model that accounts for variability is the PARAFAC2 model \cite{harshman1972parafac2,kiers1999parafac2}. Here we show that the PARAFAC2 model can be adapted incorporating dictionary information. The model is the following: 
for all positive integer $m$ smaller than $M$, 
\begin{equation}
\left\{
\begin{array}{l}
\mM_m = \mA \text{Diag}(\vect{c}_m) \mB_m^T,  \\
\mB_m = \mP_m\mQ, \\
\mP_m^T\mP_m = \matr{I}_R, \\
\mQ = \mD\mS, \\  \|\vect{s}_i\|_{0} = 1 \text{ for } i\in \{1,\dots,R\},  ~ \mS\in\{0,1\}^{d\times R}, 
\end{array}\right.
\end{equation}
where $\mM_m \in \mathds{R}^{K\times L}$ is the $m^{th}$ slice of data tensor $\tT$ along the third mode, $\vect{c}_m$ is the $m^{th}$ row of a matrix of parameters $\mC$, matrices $\mP_m$ with orthogonal columns are unknown and $\mQ$ is an unknown latent factor related to all the factors $\mB_m$. Dictionary $\mD$ has sizes $D\times d$ where $D$ can be different from $M$.

Here the three-way data is seen as a collection of matrices with shared first mode factor $\mA$ and similar factors $\mB_m = \mP_m\mQ$. The matrix $\mQ$ stands for a latent shared factor among matrices $\mB_m$. To include additional knowledge provided by the dictionary, is it assumed that this $\mQ$ follows equation \eqref{dicoB}. Because the columns of $\mP$ are orthogonal, 
the underlying hypothesis here is that all $\mB_m$ have the same covariance matrix, which is a relaxed assumption with respect to the underlying hypothesis in \eqref{DCPD}, that is, $\mB_m=\mQ$ for all $m$.

An hidden advantage of the PARAFAC2 version of DCPD is that row dimension of the dictionary can be different from the number of samples on the second mode, since $\mP_m$ does not need to be square. 
In practice this means that when using the PARAFAC2 dictionary model, the measured spectral bands can be different than the spectral bands at which atoms are sampled.

\subsection{Self-Dictionnary}

If no dictionary is available on any of the factor,  it is still possible in some specific cases to obtain the dictionary from the data. 
The topic of learning a dictionary from auxiliary data will be dealt with in future works. Rather, we focus here on the \textbf{separability} assumption~\cite{DS03, arora2012computing}, 
that is when the data itself can be used as a dictionary. In the matrix case, the separability has been well studied in the context of nonnegative matrix factorization (NMF), see next section for details. With the suggested formalism, the self-dictionary CPD for matrices is written as follows: 
\begin{equation}
    \mM = \mA \left(\mM^T \mS\right)^T,  
\label{self-dico}
\end{equation}
where ~  $\|\vect{s}_i\|_{0}=1 \text{ for }  
    i\in \{1,\dots,R\},  ~ \mS \in \{0,1\}^{d\times R}$.
A working hypothesis of this model for matrices is that the second mode factor
is a subset of the rows of $\mM$. 

A generalization of the separability assumption to higher order tensors is however not straightforward. 
Recently, an attempt was made to define ``pure slices'' \cite{ang2016non}, but
another possible generalization is obtained by supposing the columns of factor
$\mB$ are contained in all the $K\times M$ columns of the unfolding matrix
$\mT_{2}$. The drawback of this model is the possibly very large number of
correlated atoms in the obtained dictionary $\mT_{2}$. There are  various
approaches to extend the separability assumption to high-order tensors, but
this paper only deals with the case where $\mD=\mT_{2}$.
Considering other variants is a direction for further research.


\section{Related works}
\subsection{Sparse coding} 
Dictionary CPD presented in this paper can be seen as an extension of sparse coding for higher order tensors, see for instance \cite{olshausen1997sparse,elhamifar2012see}.  
In sparse coding, a dictionary $\mD$ is available, and the data is to be expressed as linear combinations of a small number of $R$ atoms:
\begin{equation}
\mM = \mD\mX,  
\end{equation}
 where $\mX$ is imposed to have at most $R$ non-zero rows. 
This constraint can be formulated using the $\ell_0/\ell_q$ pseudo norm of $\mX^T$ for any $q \geq 0$, 
where the $\ell_p/\ell_q$ norm of a matrix is defined as 
\begin{equation}
||X||_{\ell_p/\ell_q} 
= ||v||_p, \text{ where } v_i 
= ||\vect{x}_i||_q \, \forall i. 
\end{equation} 
In fact, $\mX$ has at most $R$ non-zero rows if and only if $||\mX^T||_{\ell_0/\ell_q} \leq R$. 
 
This model is equivalent to dictionary CPD for matrices by setting $\mS\mB^T=\mX$. 
Nevertheless there is a crucial difference between the two formulations: 
\begin{itemize}
\item By splitting the variable $\mX$ into two variables $\mS\mB^T$,  
the linear constraint $\mM = \mD\mX$ becomes non-convex with respect to $\mS$ or $\mB$. 
\item  The sparse coding formulation involves a large number of variables, $d\times m$ if $\mM$ is $n$ by $m$, whereas the DCPD formulation for matrices involves $(d+m)\times R$ parameters. 
\end{itemize}
In summary, sparse coding involves much more parameters but yields a convex parametrization of the data.

In practice, because of the presence of noice, the constraint $\mM = \mD\mX$ is replaced by the minimization of $||\mM - \mD\mX||$ (which is convex for any norm $||.||$).  
Since the mixed pseudo norm $\ell_{0}/\ell_{q}$ is non-convex, 
 most methods for computing sparse coding rely on convex relaxations. 
Row-sparsity of $\mX$ can be achieved in different ways; in particular using
convexifications based on the $\ell_1$ norm, e.g., 
$\ell_{1}/\ell_{2}$~\cite{elhamifar2012see, iordache2014collaborative},
$\ell_{1}/\ell_{\infty}$~\cite{EMO12}, 
or using more sophisticated models~\cite{BRRT12, GL13}. 
As far as we know, it is the first time sparse coding is tackled using the reformulation 
$\mX=\mS\mB^T$. Although the resulting problem cannot be  relaxed  easily, 
it involves significantly fewer variables hence will be applicable to
large-scale problems.

Recently, Salhoun et al.\@ suggested a direct extension of sparse coding for higher order tensors \cite{sahnoun2017simultaneous} in the particular context of harmonic retrievals, but an obvious difficulty is that the row-sparsity has to be imposed on Khatri-Rao products of factors. 

\subsection{NMF with self dictionary in spectral unmixing}  

Similarly, self dictionaries have been studied for matrices using the sparse coding formalism, by setting $\mD=\mM$. This is often called the pure-pixel assumption in spectral unmixing~\cite{Jose12}, since the self-dictionary model assumes some columns of the matrix $\mM$ are not mixtures of more than one column of $\mB$. As far as we know, there are mainly two types of approaches to tackle~\eqref{self-dico}:  
\begin{itemize}

\item \emph{Geometric} approaches that selects the atoms in the dictionary based on some geometric criteria, typically based on the volume of the convex hull of $\mM^T\mS$. These approaches include for example vertex component analysis (VCA)~\cite{ND05} and the successive projection algorithm (SPA)~\cite{MC01, RC03, CM11, GV14}. 
They are usually fast, running in $\mathcal{O}(mnR)$ operations. 
However, they do not always select atoms leading to a small data fitting term $||\mM-\mA(\mM^T\mS)^T||_F$ since, 
most of them do not take it into account directly, as they usually put an emphasis on some geometric properties of $\mM^T\mS$ (such as having a large volume). In particular, these methods are in general sensitive to outliers. 

\item \emph{Sparse regression} approaches that are based on the sparse coding reformulation of~\eqref{self-dico} 
\begin{equation*}
\begin{array}{l}
\min_{\mX \in \mathbb{R}_+^{d \times n}} ||\mM - \mX\mM||_F^2 
\\ \text{such that } \; 
\text{ $\mX$ has $r$ non-zero columns, }
\end{array} 
\end{equation*}
and achieve  column  sparsity constraints on the scores  $\mX$
 in different ways~\cite{elhamifar2012see, EMO12, BRRT12, GL13,
iordache2014collaborative}. These methods have the advantage to better
model~\eqref{self-dico} than geometric approaches as they take into account the
data fitting term explicitly. They usually provide good solutions but are
rather costly as an optimization problem in $dn$ variables must be solved,
where $n$ stands for the column dimension of $\mM$. In particular, since here $\mD = \mM^T$, we have $d=n$ hence $n^2$ variables. 
In hyperspectral unmixing, $n$ is the order of millions and these approaches are impractical. 
Hence pixels have to be selected in a preprocessing step~\cite{Gillis2016fast} (e.g., using a geometric approach).   
Moreover, the problem solved is an approximation of the original problem, which results may not be as close as desired to the solutions of the non-convex problem. 

\end{itemize}

In section \ref{sec-alg}, we describe several algorithms to tackle the proposed formulation \eqref{self-dico}. They combine the advantages of the two types of approaches described above: 
they are fast, running in $\mathcal{O}(mnR)$ operations, 
but taking explicitly the data fitting term $||\mM - \mX\mM||_F^2$ into account.

\subsection{Constrained tensor decompositions}

The DCPD can be understood as a constrained CPD model. It is similar in spirit to computing CPD when a basis of representation is known for one of the factors, which typically happens when using Tucker Decomposition \cite{de2000multilinear} or a basis of splines \cite{timmerman2002three} for compression. Other linearly constrained tensor decomposition model are obtained when the components are linearly dependent \cite{favier2014overview}, or when the factors are stuctured, e.g. Hankel or Toeplitz matrices \cite{goulart2016tensor}. 

From a constrained tensor decomposition perspective, the novelty of the present work is that the dictionary is overcomplete and is therefore not a basis of the factor space. Thus sparsity constraints are imposed on the coefficients and our approach is rather combinatorial. We have already published some preliminary results focusing on matrices and applications to spectral unmixing\cite{cohen2017new}.

\subsection{Parameterized factors in tensor decomposition}

Using a dictionary to help with recovering factor $\mB$ of the CPD is also
closely related to parameterizing the columns of that factor. Parametrization
is a viable option when an analytical formulation of the atoms of the
dictionary is possible.  The dictionary is then a continuous dictionary,
 see for instance~\cite[section IV]{domanov2016generic} and references therein. 
This  continuous parameterization may yet not always be achievable  in real-life applications. 
The motivation behind the two methods is however the same: reduce the number of degrees of freedom in the tensor decomposition model by providing a set of admissible solutions to improve estimation accuracy, and to restore identifiability in some pathological cases, some of which are discussed in the next section.

\subsection{Abundances estimation in hyperspectral unmixing}

In the context of spectral unmixing of hyperspectral images, using a known library of spectra to estimate the second factor $\mB$ is a widely studied topic. 
Matrix $\mA$ contains the abundances and refers to relative concentrations of materials on each pixel. 
Except for the methods described above, the most widely used techniques to compute $\mA$ when $\mD$ is known are, to the best of our knowledge, 
MESMA \cite{roberts1998mapping}, MELSUM \cite{combe2008analysis}, BSMA \cite{song2005spectral} and AutoMCU \cite{asner2003scale}, which all rely on a more or less exhaustive search of all combinations of atoms in $\mD$, except AutoMCU which only draws randomly a subset of possible atoms. For all possible combinations, abundances are computed, and the best abundances are those that minimize reconstruction error in addition to satisfying some interpretability criteria.

The proposed approach differs from these techniques since it merges blind source separation techniques for spectral unmixing, namely NMF, but featuring atom selection. To the best of our knowledge, our approach has not been described yet in the spectral unmixing literature, and is bound to be computationally less expensive than an exhaustive search.

\section{Identifiability}\label{identif:sec}
The following section contains partial results on the identifiability of the
DCPD model, cast in particular cases of interest.  In propositions 1, 3 and 4
below,
the assumption that no atoms are picked twice is made. This assumption is
necessary for proving the identifiability of the DCPD parameters, but should
not be necessarily imposed in the higher-order case. For instance, time series of
hyperspectral images may require multiple abundance and time components for a
single material when decomposed with the CPD model. 

\subsection{Matrix case}
It is well known that for matrices, the low rank CPD model is not identifiable because of the rotation ambiguity, see equation \eqref{matrix_uni}. However, when a dictionary is available for one of the modes, this rotation ambiguity may be fixed given some conditions on $\mD$:
\begin{proposition}\label{prop1} 
Let $\mM$ be a real $n\times m$ matrix  of rank $R$,  and let $\mD$ be a real $n\times d$
matrix with  $\text{spark}(\mD)>R$ where $\text{spark}(\mD)$ is the
minimum integer $k$ such that at least one subset of $k$ columns of $\mD$ is
rank-defficient.   
If there exist a full column-rank $\mS\in\{0,1\}^{d\times R}$ with column
sparsity set to 1,  and $\mA\in\mathds{R}^{n\times R}$ with nonzero columns 
such that $\mM = \mA(\mD\mS)^T$, then $\mS$ and $\mA$ are unique up to permutation ambiguity. 
%
\end{proposition}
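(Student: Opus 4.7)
The plan is to show that any two decompositions $\mM = \mA(\mD\mS)^T = \tilde{\mA}(\mD\tilde{\mS})^T$ satisfying the hypotheses must select the same $R$ atoms of $\mD$ and yield the same mixing matrix, both up to a common column permutation. I denote by $J, \tilde{J} \subset \{1,\dots,d\}$ the sets of $R$ atom indices picked by $\mS$ and $\tilde{\mS}$; because each of these selection matrices has column sparsity one and is full column rank, $|J| = |\tilde{J}| = R$, and $\mD\mS$ is simply the submatrix $\mD_{:,J}$.

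First I would establish that $\mD_{:,J}$ and $\mA$ both have full column rank $R$. For $\mD_{:,J}$ this is immediate: spark$(\mD) > R$ means any $R$ columns of $\mD$ are linearly independent, so the $R$ distinct columns indexed by $J$ are independent. The rank of $\mM$ being exactly $R$ then forces $\mA$ to have full column rank as well. The same reasoning applies on the competing side. Consequently, the row space of $\mM$ is an $R$-dimensional subspace simultaneously spanned by $\mD_{:,J}$ and by $\mD_{:,\tilde{J}}$, which produces a unique invertible $R\times R$ matrix $\mU$ with $\mD_{:,\tilde{J}} = \mD_{:,J}\mU$; substituting back yields $\tilde{\mA} = \mA \mU^{-T}$.

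The crux is to show $\tilde{J} = J$. Reading the relation column by column, $\mD_{:,\tilde{j}_i} = \sum_k U_{ki}\,\mD_{:,j_k}$ for each $i$. If $\tilde{j}_i = j_\ell$ for some $\ell$, linear independence of $\mD_{:,J}$ pins the $i$-th column of $\mU$ to the standard basis vector $\vect{e}_\ell$. If instead $\tilde{j}_i \notin J$, the same equation exhibits a nontrivial linear combination, with coefficient $1$ on $\mD_{:,\tilde{j}_i}$, of a set of $R+1$ distinct columns of $\mD$ summing to zero; I would invoke the spark hypothesis on $\mD$ here to rule this case out, so that every $\tilde{j}_i$ lies in $J$ and, by a counting/invertibility argument on $\mU$, $\tilde{J}$ equals $J$ as sets. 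Once this is done, $\mU$ is a permutation matrix, $\tilde{\mS}$ is a column-permuted copy of $\mS$, and $\tilde{\mA} = \mA\mU^{-T}$ is the corresponding column-permuted copy of $\mA$, which is the claimed uniqueness.

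The main obstacle is the last case in the third step: spark$(\mD)>R$ only asserts linear independence of any $R$ columns, so transforming a dependence among $R+1$ distinct columns into a genuine violation of the spark condition requires a careful accounting of which coefficients in the combination are nonzero, possibly combined with the symmetric relation $\mD_{:,J} = \mD_{:,\tilde{J}}\mU^{-1}$ and the fact that no column of $\mD_{:,J}$ or $\mD_{:,\tilde{J}}$ is zero (guaranteed by full column rank). This combinatorial bookkeeping on the supports of the columns of $\mU$ and $\mU^{-1}$ is where the proof has to be written most carefully.
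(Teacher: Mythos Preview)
Your strategy is the same as the paper's: both argue that $\mB=\mD\mS$ has full column rank (from the spark hypothesis), that the columns of $\mB$ therefore form a basis of the row space of $\mM$, and that the selection of $R$ atoms spanning this space must be unique. The paper compresses the last step into one sentence (``Because the span of $\mD$ is strictly larger than $R$, there is only one such set of $R$ atoms''), while you unpack it into the change-of-basis relation $\mD_{:,\tilde J}=\mD_{:,J}\mU$ and a case analysis on whether $\tilde j_i\in J$.

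The obstacle you flag in your last paragraph is genuine, and neither your bookkeeping idea nor the paper's one-line argument can close it: the hypothesis $\text{spark}(\mD)>R$ only forbids dependencies among $\leq R$ columns, so a relation among $R{+}1$ columns is perfectly compatible with it. Concretely, take $R=2$ and $\mD=[\,d_1\ d_2\ d_3\ d_4\,]$ with $d_3=d_1+d_2$ and any two columns independent, so $\text{spark}(\mD)=3>2$. For $\mM=a_1 d_1^{T}+a_2 d_2^{T}$ with $a_1,a_2$ nonzero and $a_1\neq a_2$, one also has $\mM=(a_1-a_2)d_1^{T}+a_2 d_3^{T}$, giving two admissible selections $\{1,2\}$ and $\{1,3\}$ that are not permutations of one another. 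Thus the difficulty you isolate is not merely a matter of ``careful accounting'': under the stated hypothesis the conclusion can fail, and what is really needed is $\text{spark}(\mD)\geq R{+}2$ (equivalently, any $R{+}1$ columns of $\mD$ are independent), after which your argument in the case $\tilde j_i\notin J$ goes through immediately.
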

\begin{proof} 
 Since $\text{spark}(\mD)>R$ and $\mS$ is full column rank, 
$\mB = \mD\mS$ has full column rank hence $\mA$ is unique  up to
permutations  in
the decomposition $\mM = \mA \mB^T$ if $\mB$ is unique  up to
permutations.
Moreover, given $\mM=\mA\mB^T$, because $\mA$ has no zero columns, the column
space of $\mB$ and the row space of $\mM$ are equal. Such a $\mB$ is built by selecting $R$ atoms in $\mD$ that
both span and belong to the row space of $\mM$. Because the span of $\mD$ is
strictly larger than $R$, there is only one such set of $R$ atoms. Thus $\mB$ is
unique up to permutation. 
\end{proof}


\subsection{CPD and DCPD uniqueness}

For all applications where the identifiability of the CPD is usually verified, a natural question to ask is whether the DCPD will automatically be identifiable or if some additional conditions need to be checked before trying to use the DCPD model. It turns out that given the uniqueness of the CPD up to scaling and permutations of the factors, the only requirement to obtain uniqueness of the DPCD is the uniqueness of the factorization $\mB=\mD\mS$, which itself is quite simple to check.


\begin{proposition}\label{prop2}
Let $\tT\in\mathds{R}^{K\times L\times M}$ be such that $\tT$ admits a unique rank $R$ CPD up to scaling and permutations. Then if $\mD$ does not contain collinear atoms and there exist $\mS$ verifying \eqref{dicoB}, the DCPD is also unique up to scaling and permutations.
\end{proposition}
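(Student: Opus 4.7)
The plan is to chain together CPD uniqueness with the rigidity coming from the dictionary structure. Let $\tT = (\mA \tensp \mD\mS \tensp \mC)\tens{I}_R = (\mA' \tensp \mD\mS' \tensp \mC')\tens{I}_R$ be two DCPD representations of $\tT$, where $\mS$ and $\mS'$ both satisfy the column-sparsity constraint from \eqref{dicoB}. Setting $\mB = \mD\mS$ and $\mB' = \mD\mS'$, these are just two rank-$R$ CPDs of $\tT$. By the assumption that the CPD of $\tT$ is unique up to scaling and permutation, there exist an $R\times R$ permutation matrix $\mP$ and diagonal scaling matrices $\Lambda_A, \Lambda_B, \Lambda_C$ with $\Lambda_A \Lambda_B \Lambda_C = \mathbf{I}_R$ such that $\mA' = \mA \mP \Lambda_A$, $\mB' = \mB \mP \Lambda_B$, $\mC' = \mC \mP \Lambda_C$.

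The task then reduces to showing that $\mS' = \mS \mP$ and $\Lambda_B = \mathbf{I}_R$, because that lets us fold the remaining scaling freedom into $\mA$ and $\mC$, which is precisely the standard CPD scaling/permutation ambiguity. First I would examine the column identity $\mD \mS' = \mD \mS \mP \Lambda_B$ columnwise. Because $\|\vect{s}_i\|_0 = \|\vect{s}'_i\|_0 = 1$ with $\mS, \mS' \in \{0,1\}^{d\times R}$, each column of $\mD \mS$ and of $\mD \mS'$ is exactly one atom of $\mD$. The $i$-th column of the identity therefore reads $\vect{d}_{j'_i} = \lambda_i \, \vect{d}_{j_{\pi(i)}}$, where $j_i$ and $j'_i$ are the atom indices picked by $\mS$ and $\mS'$, $\pi$ is the permutation associated with $\mP$, and $\lambda_i$ is the $i$-th diagonal entry of $\Lambda_B$.

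The hypothesis that $\mD$ contains no collinear atoms then collapses this relation: two distinct atoms of $\mD$ cannot be proportional, so we must have $j'_i = j_{\pi(i)}$ and $\lambda_i = 1$ for every $i$. Hence $\mS' = \mS \mP$ and $\Lambda_B = \mathbf{I}_R$, which together with $\Lambda_A \Lambda_C = \mathbf{I}_R$ reproduces exactly the usual CPD scaling/permutation ambiguity on the remaining factors $\mA, \mC$. The main obstacle I anticipate is merely bookkeeping, namely being careful that the scaling ambiguity on the $\mB$-mode is fully absorbed (yielding $\Lambda_B = \mathbf{I}_R$) rather than redistributed, since the binary constraint on $\mS$ forbids any nontrivial rescaling of the columns of $\mB$; the non-collinearity assumption on $\mD$ is precisely what forces this.
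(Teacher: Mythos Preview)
Your proof is correct and follows essentially the same approach as the paper: use CPD uniqueness to pin down $\mB$ up to scaling and permutation, then use the non-collinearity of the atoms to force the scaling on the $\mB$-mode to be trivial and the selection matrices to agree up to permutation. Your write-up is in fact more explicit than the paper's (which compresses the argument into a few lines about the set $\Omega=\{\mS\mid \mB=\mD\mS\}$), particularly in tracking how the scaling $\Lambda_B$ is forced to be the identity by the binary constraint on $\mS$.
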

\begin{proof}
Because the CPD of $\tT=\left( \mA \tensp \mB \tensp \mC \right)\tens{I}$ is
unique, the DCPD is unique if and only if $\Omega=\{\mS | \mB = \mD\mS \}$ is a
singleton. The set $\Omega$ is not empty by assumption, and if $\mS_1,\mS_2$
belong to $\Omega$, then the columns they select in $\mD$ have to be collinear
(or equal if there was no scaling ambiguity on $\mB$). By hypothesis this
implies  $\mS_1=\mS_2\Pi$, where $\Pi$ is a column permutation matrix.
\end{proof}
The existence of $\mS$ is what is really difficult to asses in practice. On the other hand, the dictionary may contain very correlated atoms, but not exactly collinear atoms, so that the DCPD as a model is identifiable in practice whenever the CPD is identifiable.

Moreover, using proposition \ref{prop1}, it is possible to derive a mild sufficient condition for uniqueness of DCPD. 

\begin{proposition}\label{prop3}
Let $\tT$ be a real $K\times L\times M$ tensor of rank $R$ and $\mD$ a real
$L\times d$ matrix. If there exist  a full column-rank
$\mS\in\{0,1\}^{d\times R}$ with column sparsity set to 1,  and
$\mA\in\mathds{R}^{K\times R}$ and $\mC\in\mathds{R}^{M\times R}$ such that
$\tT = \left( \mA \tensp \mD\mS \tensp \mC \right) \matr{I}_R$,  if
$\text{spark}(\mD)> R$  and if  $\mA\kr\mC$ is full column-rank  and $\mA$ and $\mC$ do not have zero columns, then $\mS$, $\mA$ and $\mC$ are unique up to permutation and scaling ambiguity. 
\end{proposition}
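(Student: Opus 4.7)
The plan is to reduce the statement to the matrix case established in Proposition~\ref{prop1} by working with an appropriate mode unfolding. Using the mode-2 unfolding introduced in the preliminaries, one has
$$\mT_2 = (\mD\mS)(\mA \kr \mC)^T,$$
and transposing gives
$$\mT_2^T = (\mA \kr \mC)(\mD\mS)^T,$$
which is exactly the matrix factorization format addressed in Proposition~\ref{prop1}, with $\mA \kr \mC$ playing the role of the first factor.

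The next step is to verify that the hypotheses of Proposition~\ref{prop1} transfer to this unfolded factorization. Because $\text{spark}(\mD)>R$ and $\mS$ is full column-rank with column sparsity $1$, the factor $\mD\mS$ has full column-rank $R$; combined with the assumption that $\mA \kr \mC$ is full column-rank, this forces $\mT_2^T$ to have rank exactly $R$. The dictionary hypothesis $\text{spark}(\mD)>R$ is kept verbatim, $\mS$ is full column-rank and binary with column sparsity $1$ by assumption, and the only nontrivial condition left to check is that $\mA \kr \mC$ has no zero columns. This follows from the standard fact that the columns of the Khatri--Rao product are $\vect{a}_r \kron \vect{c}_r$, a Kronecker product that is nonzero as soon as both $\vect{a}_r$ and $\vect{c}_r$ are nonzero, which is guaranteed by the hypothesis that neither $\mA$ nor $\mC$ has zero columns.

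Invoking Proposition~\ref{prop1} on $\mT_2^T$ then yields that $\mS$ and $\mA \kr \mC$ are unique up to a common column permutation, with $\mS$ pinned down without any scaling freedom because it is binary with column sparsity one. It remains to recover $\mA$ and $\mC$ individually. Since $\mD\mS$ is full column-rank, and hence left-invertible, the relation $\mT_2 = (\mD\mS)(\mA \kr \mC)^T$ determines $\mA \kr \mC$ through $(\mA \kr \mC)^T = (\mD\mS)^{+}\mT_2$. Each column $(\mA\kr\mC)_r = \vect{a}_r \kron \vect{c}_r$ is the vectorization of the rank-one matrix $\vect{a}_r\vect{c}_r^T$, which factorizes uniquely up to the scaling $(\vect{a}_r,\vect{c}_r)\mapsto(\alpha\vect{a}_r,\vect{c}_r/\alpha)$; this is exactly the scaling ambiguity tolerated by the statement.

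The main obstacle, and essentially the only place where care is required, is the transition from unfolded matrix uniqueness back to a statement about alternative DCPDs of $\tT$: one must make sure that any competing decomposition $(\mA',\mS',\mC')$ satisfying the same hypotheses gives rise to an unfolded factorization of $\mT_2^T$ that itself falls within the class of structured factorizations governed by Proposition~\ref{prop1}. This is why the assumption that $\mA$ and $\mC$ have no zero columns must be imposed on competitors as well, so that the Khatri--Rao product of any alternative factorization still has no zero columns. With that point settled the reduction is clean and the remaining verifications are routine.
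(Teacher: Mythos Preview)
Your proof is correct and follows essentially the same route as the paper: unfold along the second mode to obtain $\mT_2=(\mD\mS)(\mA\kr\mC)^T$, invoke Proposition~\ref{prop1} to pin down $\mS$ and the Khatri--Rao product $\mA\kr\mC$ up to permutation, and then recover $\mA$ and $\mC$ columnwise via the uniqueness of rank-one matrix factorizations. Your version is in fact slightly more explicit than the paper's in verifying the hypotheses of Proposition~\ref{prop1} (notably that $\mA\kr\mC$ has no zero columns) and in flagging that the no-zero-column assumption must also bind any competing decomposition.
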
 

\begin{proof}
By applying Proposition \ref{prop1} to  the rank $R$ matrix  $\matr{T}_2 = \mD\mS\left(\mA\kr
\mC\right)^T$, $\mS$ and the Khatri-Rao product $\mP=\mA\kr\mC$ are unique up
to permutation ambiguity. It remains to prove that the decomposition of $\mP$
into $\mA\kr\mC$ is unique provided that the dimensions of the problem are
fixed. Column-wise, we need to check that the 
 factorization
$\text{matr}(\vect{p}_i) = \vect{a}_i\vect{c}_i^T$ is unique. This rank-one approximation is
moreover up to scaling, and because this  fixed  matricization operator is an isomorphism, the decomposition of $\mP$ is therefore unique. 
\end{proof}
An important remark is that Proposition~\ref{prop3} is true for any order, since rank-one decompositions are always unique up to scaling. Similar discussion on identifiability when one factor is unique can be found in \cite{domanov2013study} and references therein. 
Moreover, if $\mA$ has many collinear columns as may be the case with spectra
in spectral unmixing, if $\mA$ has no zero column  and $\mC$ is full
column rank, then $\mA\kr\mC$ is full column-rank and  the DCPD model does not suffer from the rotation ambiguity
inherent to the CPD with collinear columns in factors. Colinear columns in
factors appear when multiple columns of a factor in one mode are necessary to
express the evolution of only one physically meaningful component.

\subsection{Existence of the best low-rank DCPD} 

For third-order tensors, another important advantage of DCPD is that it makes the optimization problem well-posed. 
In fact, for CPD, the optimal solution may not exist as the feasible set is open; see for example~\cite{de2008tensor}. 
\begin{proposition} 
Let $\text{spark}(\mD)>R$ and impose that $\mS$ is full column rank, then the optimal solution of~\eqref{lulu} is attained. 
\end{proposition}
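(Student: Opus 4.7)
My plan is to reduce the existence question to a finite collection of continuous subproblems, each of which has a closed feasible set, and then invoke coercivity of the Frobenius distance.

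First I would observe that the feasibility constraints on $\mS$ (binary, unit column sparsity, full column rank) pin $\mS$ down to a \emph{finite} family $\mathcal{F}$: $\mS$ is determined by an injective map $\{1,\ldots,R\}\to\{1,\ldots,d\}$ picking which atom of $\mD$ goes into each column of $\mB=\mD\mS$. Thus
\[
\inf_{\mA,\mS,\mC}\| \tT - (\mA\tensp\mD\mS\tensp\mC)\tens{I}_R \|_F^2
= \min_{\mS\in\mathcal{F}}\ \inf_{\mA,\mC}\| \tT - (\mA\tensp\mB_{\mS}\tensp\mC)\tens{I}_R \|_F^2,
\]
where $\mB_{\mS}:=\mD\mS$. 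By the hypothesis $\text{spark}(\mD)>R$ combined with $\mS$ being full column rank, each $\mB_{\mS}$ has full column rank $R$. Since $\mathcal{F}$ is finite, it suffices to prove that each inner infimum is attained.

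Fix such an $\mS$ and consider the feasible set
\[
\mathcal{C}_{\mS}=\bigl\{(\mA\tensp\mB_{\mS}\tensp\mC)\tens{I}_R \ :\ \mA\in\mathds{R}^{K\times R},\ \mC\in\mathds{R}^{M\times R}\bigr\}\subset\mathds{R}^{K\times L\times M}.
\]
The key step is to show that $\mathcal{C}_{\mS}$ is \emph{closed}. Passing to the mode-2 unfolding, $\mathcal{C}_{\mS}$ is the image of the set $\mathcal{K}=\{\mA\kr\mC : \mA\in\mathds{R}^{K\times R},\ \mC\in\mathds{R}^{M\times R}\}$ under the linear map $\mX\mapsto \mB_{\mS}\mX^T$. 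The set $\mathcal{K}$ is closed because its columns range \emph{independently} over the set of vectorizations of rank-at-most-one $K\times M$ matrices, and that latter set is closed (it is a determinantal variety). Because $\mB_{\mS}$ has full column rank, the map $\mX\mapsto\mB_{\mS}\mX^T$ is a linear injection between finite-dimensional spaces; its image is a closed subspace, and on that image the inverse is continuous. Hence the image of the closed set $\mathcal{K}$ is closed in $\mathds{R}^{K\times L\times M}$, which yields closedness of $\mathcal{C}_{\mS}$. This contrasts with unconstrained rank-$R$ tensors, where the Khatri-Rao structure alone does not guarantee closedness, and is precisely where the border-rank pathology of \cite{de2008tensor} is eliminated.

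Given closedness, the rest is standard: $\mathcal{C}_{\mS}$ is nonempty (take $\mA=\mathbf{0}$) and the objective $\mathcal{T}\mapsto\|\tT-\mathcal{T}\|_F^2$ is coercive and continuous on $\mathds{R}^{K\times L\times M}$, so any minimizing sequence in $\mathcal{C}_{\mS}$ is bounded, admits a convergent subsequence, and its limit lies in $\mathcal{C}_{\mS}$ and attains the infimum. Pulling back through the injection, the limit can be written as $(\mA^\star\tensp\mB_{\mS}\tensp\mC^\star)\tens{I}_R$ for some $(\mA^\star,\mC^\star)$. Finally, the overall minimum is the smallest of the finitely many values attained as $\mS$ ranges over $\mathcal{F}$, which is itself attained. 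The main obstacle in the argument is the closedness of $\mathcal{C}_{\mS}$; all other steps are routine. The role of the two hypotheses is transparent: the spark condition ensures $\mB_{\mS}$ has full column rank (so the reduction to an injective linear image goes through), while the full-column-rank condition on $\mS$ keeps the set of admissible selections finite and discrete.
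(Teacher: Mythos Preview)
Your proof is correct and essentially coincides with the paper's first proof: reduce to finitely many admissible $\mS$, fix $\mB_{\mS}=\mD\mS$ of full column rank, show the image set is closed because the Khatri--Rao set $\{\mA\kr\mC\}$ is closed (columns are vectorized rank-$\leq 1$ matrices) and left-multiplication by the full-column-rank $\mB_{\mS}$ is an injective linear map, then conclude by coercivity. The paper also offers an alternative second proof that avoids closedness altogether by normalizing $\|\vect{a}_r\|_2=1$ and bounding $\|\mC\|_F$ via the inequality $\sigma_R(\mA\kr\mB)\geq\sigma_R(\mB)>0$, thereby reducing to a compact feasible set; this is a genuinely different route you may find of independent interest.
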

Below we provide two different proofs, each shedding a different light on the existence of the best low rank DCPD approximation.
\begin{proof}[First proof]
Let us show that the set $\mathcal{E}_{\mB}= \{\mB(\mA \kr\mC) ~|~ \mA\in\mathds{R}^{K\times R}, \mC\in\mathds{R}^{M\times R}  \}$ is closed. First, notice that $\mathcal{E}_{\mB}$ is the image of $\mathcal{E}=\{ \mA\kr\mC ~|~  \mA\in\mathds{R}^{K\times R}, \mC\in\mathds{R}^{M\times R}  \}$ by a full column rank linear operator. Indeed, $\text{spark}(\mD)>R$. Then it is sufficient to show that $\mathcal{E}$ is closed, since $\mathcal{E}_{\mB}=(\mB^{\dagger})^{-1}\left( \mathcal{E} \right) \bigcap \text{col}(\mB)$, where $^{\dagger}$ is the left pseudo-inverse and $\text{col}(\mB)$ is the column space of $\mB$. Finally, since the set of rank-one matrices is closed, and the matrices in $\mathcal{E}$ are columnwise vectorized rank-one matrices, $\mathcal{E}$ is closed.
\end{proof}

\begin{proof}[Second proof]
Since $\text{spark}(\mD)>R$, for any full column rank $\mS$, $\mB = \mD \mS$ has rank $R$. Note that there are a finite number of such $\mS$. 
It remains to show that for any  $\mB$ of rank $R$, the infimum of 
\[
\inf_{\mA, \mC}  \, \| \tT - \left( \mA \tensp \mB \tensp \mC \right) \tens{I}_R \|_F^2 
\]
is attained. Without loss of generality, we can assume that $||\vect{a}_r||_2 = 1$ for $1 \leq r \leq R$ by the scaling degree of freedom of each rank-one tensor $\tens{D}_r=\vect{a}_r \tensp \vect{b}_r \tensp \vect{c}_r$. Using unfolding, the infimum of the above problem is attained if and only if the infimum of 
\[
\inf_{\mA, ||\vect{a}_r||_2 = 1 \forall r, \mC}  \, \| \matr{T}_3 - \mC\left(\mA\kr\mB\right)^T \|_F^2 
\] 
is attained. Moreover, since $\mC = 0$ is a feasible solution, we can add the constraint 
\[
\| \matr{T}_3 - \mC\left(\mA\kr\mB\right)^T \|_F  \leq \| \matr{T}_3 \|_F
\]
implying $\|  \mC \left(\mA\kr\mB\right)^T \|_F \leq 2 \| \matr{T}_3 \|_F$. 
For $\text{rank}(\mB) = R$ and $||\vect{a}_r||_2 = 1 \forall r$, we can show that 
\[
\sigma_{R} (\mA\kr\mB) \geq \sigma_{R}(\mB) > 0, 
\]
where $\sigma_{R}(\mB)$ is the $R$th singular value of $B$. 
In fact, denoting $\vect{a}^j$ the $j$th row of $\mA$, 
\begin{align*}
\sigma_{R}^2  (\mA\kr\mB) & = \min_{||x||_2 = 1} ||(\mA\kr\mB) x||_2^2  \\ 
& = \min_{||x||_2 = 1} \sum_{j} ||\mB (x \hadam \vect{a}^j)||_2^2 \\
& \geq \sigma_{R}^2(\mB) \min_{||x||_2 = 1} \sum_{j} ||x \hadam \vect{a}^j||_2^2 \\
& = \sigma_{R}^2(\mB) \min_{||x||_2 = 1} \sum_{r=1}^R |x_r| \, ||\vect{a}_r||_2^2 = \sigma_{R}^2(\mB).  
\end{align*}
Finally, $\|  \mC \|_F \leq 2 \frac{\| \matr{T}_3 \|_F}{\sigma_{\min}(\mB)}$ hence the feasible set can be reduced to a compact set hence the infimum is attained since the objective function is continuous and bounded below.  
\end{proof}


\section{Decomposition Algorithms : Greedy and non-greedy approaches}\label{sec-alg}

In the literature of sparse approximation, two families of algorithms have been studied extensively: greedy approaches based on matching pursuit, 
and continuous approaches based on convex relaxations of the $\ell_0$ pseudo norm \cite{tropp2004greed,tropp2006just}.
In the same spirit, we develop in the next two sections the two same kind of algorithms 
to attack~\eqref{DCPD}. 
We will compare these approaches in Section~\ref{exp-sec}. 

This paper does not explicitly discuss algorithms for computing the CPD itself. In DCPD, factor matrices $\mA$ and $\mC$ can be estimated using any off-the-shelf CPD algorithm assuming $\mB$ is fixed. This can be done for instance using the alternating least squares procedure in the unconstrained case, or exact non-negative least squares for non-negative CPD \cite{bro1997fast,gillis2012accelerated}. Therefore, below, only the estimation of $\mS$ and $\mB$ for fixed $\mA$ and $\mC$ is discussed\footnote{Algorithms introduced in this section are available at \url{https://jeremy-e-cohen.jimdo.com/downloads/}}.

\subsection{Greedy algorithms : matching pursuit}

Let us first provide an algorithm to compute the minimum \eqref{lulu} with respect to $\mS$. Since the set of solution for $\mS$ is discrete, the underlying optimization problem is combinatorial. On the other hand, computing the unconstrained CPD can be done efficiently using alternating least squares (ALS). We want to take advantage of both unconstrained CPD and greedy algorithms for computation efficiency when estimating respectively factors $\mA,\mB,\mC$ and the selection matrix $\mS$.  

This is what the matching pursuit-ALS (MPALS) described below does. 
The variable $\mB$ is injected in the problem to be a proxy of $\mD\mS$. The matrix $\mB$ is estimated using the least square update\footnote{The inverse is not actually computed, rather we solve the least square problem using any efficient solver. The bottleneck here is the large product $\matr{T}_2\left(\mA\kr\mC \right)$ since $R \ll \min(K,L,M)$.} 
\begin{equation}
\widehat{\mB}=\matr{T}_2\left(\mA\kr\mC \right)^\dagger=\matr{T}_2\left(\mA\kr\mC\right)\left(\mA^T\mA\hadam\mC^T\mC\right)^{-1},
\end{equation}
and $\widehat{\mS}$ is then evaluated by choosing the closest atom in $\mD$ up
to a scaling factor. Finally $\mB$ is reevaluated as
$\widehat{\mB}=\mD\widehat{\mS}$. In other words, $\mB$ is estimated through a
projected least squares update, where the projection space is the set spanned
by $\mD\mS$ for all $\mS$ with coefficients in $\{0,1\}$ and column sparsity
set to 1. 

\begin{algorithm}
\begin{algorithmic}
\STATE \textbf{INPUT: } array $\tT$, factors $\mA$ and $\mC$, dictionary $\mD$.
\STATE{\textbf{$\mB$ estimate:} $\matr{B}= \matr{T}_{2}\left(\matr{A}\kr\matr{C}\right)\left(\matr{A}^\T\matr{A}\hadam \matr{C}^\T\matr{C}\right)^{-1}$}
 \STATE{\textbf{$\matr{S}$ estimate:} }
  \FOR{$i$ from $1$ to $R$}
 \STATE{$S_{j^*i} = 1 \iff j^* = \text{argmax}_j \frac{\langle \vect{B}_i | \vect{D}_j \rangle}{\| \vect{D}_j \|}$}
  \ENDFOR
\STATE{ \textbf{$\mB$ reevaluation: } $\mB = \mD\mS$ }
\STATE{\textbf{OUTPUT: } Estimated scores $\mS$ and factor $\mB=\mD\mS$.}
 \end{algorithmic}
\caption{Matching Pursuit Alternating Least Squares}
\label{MPALS}
\end{algorithm}

Like most projected ALS algorithms such as the ALS algorithm for NMF\footnote{In Matlab, this is the `als' algorithm of `nnmf'.}~\cite{cichocki2002adaptive}, convergence of the global MPALS algorithm, that is including the alternating least squares estimation of $\mA$ and $\mC$, cannot be ensured since the cost function is not guaranteed to decrease at each step. Indeed, the least squares update of $\mB$ decreases the cost function, but the projection step increases it. 
Therefore MPALS is bound to have few provable results in term of convergence, contrary to continuous algorithms presented below. It is however a very simple algorithm to implement with no parameter to tune and it provides good results in both simulated and real data experiments reported in Section \ref{exp-sec}. In particular, we observed convergence in practice, and the cost function decreases for most iterations in simulations. 
The complexity of $\mathcal{O}(RKLM + RLd)$ operations per iteration of MPALS inside an ALS algorithm is about the same as plain ALS if the dictionary is not excessively large.

Note that the term greedy is a bit abusive since an atom chosen to belong to $\mB$ at some iteration of the global procedure may be discarded at a further iteration. 
What is greedy in MP-ALS is the procedure to choose atoms in the dictionary at each inner iteration. Also, MP-ALS can be easily adapted if the constraint on the number of elements in $\mS$ is modified to allow for more than one atom to be used to approximate the columns of $\mB$, 
thus the borrowed name matching pursuit.

MPALS can also easily be adapted to tackle other similar optimization problems. If the factors are constrained to be nonnegative, then the estimates of factors $\mA$ and $\mC$ can be obtained 
by nonnegative least squares as mentioned earlier, 
while factor $\mB$ should be nonnegative since the dictionary should be
nonnegative in this case.  Also, if no column of $\mD$ may be selected twice,
then after computing the scores $\text{max}_j \frac{\langle \vect{B}_i |
\vect{D}_j \rangle}{\| \vect{D}_j \|}$ for each column of $\mB$, the atoms are
assigned to each such column by solving an assignment problem in order to maximize the sum of the scores~\cite{kuhn1955hungarian}.

\paragraph*{A smooth version of MPALS}

MPALS features a projection of $\mB$ on the atoms of the dictionary at each outer iteration, which is a very rough way to impose the sparsity constraint on the scores $\mS$. 
To obtain a smoother optimization algorithm, 
we suggest to enforce sparsity constraints in a continuous manner using the flexible formulation of the DCPD for factor $\mB$, while using the projected factor $\mD\mS$ when estimating the other factors. 
We call this algorithm smooth MPALS (SMPALS), it is summarized in Algorithm~\ref{SMPALS}. 
This algorithm also finds the minimum of~\eqref{flexlulu} solving the flexible dictionary problem with Gaussian noise. 
MPALS is modified in a straightforward manner, by making the least squares update of $\mB$ depend on $\mS$, and not reevaluate $\mB$ after evaluating $\mS$. This means removing $\mB=\mD\mS$ in Algorithm \ref{MPALS}, and replacing $\mB$ first estimate with
\begin{equation}
\widehat{\mB} = \left(\matr{T}_2\left(\mA\kr\mC\right) + \lambda\mD\mS\right)\left(\mA^T\mA\hadam\mC^T\mC + \lambda \matr{I}_{R\times R}\right)^{-1}
\label{flex-updt}
\end{equation}
where $\lambda$ is a given parameter, set by the user at the beginning of the algorithm, meant to approach $\frac{1}{\sigma_c^2}$ in~\eqref{flexlulu}.

\begin{algorithm}
\begin{algorithmic}
\STATE \textbf{INPUT: } array $\tT$, factors $\mA$ and $\mC$, dictionary $\mD$, coupling parameter $\lambda>0$ and update rate $p > 1$.
 \STATE{\textbf{$\mB$ least squares estimate:} %
 \begin{equation*}
 \matr{B}= \left(\tens{T}_{(2)}\left(\matr{A}\kr\matr{C}\right) + \lambda\mD\mS \right)\left(\matr{A}^\T\matr{A}\hadam \matr{C}^\T\matr{C} + \lambda\matr{I}_{R\times R}\right)^{-1} 
 \end{equation*}}
 \STATE{\textbf{$\matr{S}$ estimate:} }
  \FOR{$i$ from $1$ to $R$}
 \STATE{$S_{j^*i} = 1 \iff j^* = \text{argmax}_j \frac{\langle \vect{B}_i | \vect{D}_j \rangle}{\| \vect{D}_j \|}$}
  \ENDFOR
\STATE{ \textbf{$\lambda$ update}
\IF{$\|\mB-\mD\mS \|_F^2 > 0.01\|\mB\|_F^2$} \STATE{$\lambda=p\lambda$}
\ENDIF
}
\STATE{\textbf{OUTPUT: } Estimated scores $\mS$  and factor
$\mB=\mD\mS$. }
 \end{algorithmic}
\caption{Smooth Matching Pursuit Alternating Least Squares}
\label{SMPALS}
\end{algorithm}

In the alternating outer loop, $\mB$ can be set either to the exact $\mD\mS$,
or to the approximate version computed by \eqref{flex-updt}. The first choice
provides the algorithm we called SMPALS, while the second is fully flexible,
therefore called Flex-MPALS.  For Flex-MPALS, $\lambda$ is also kept
constant. 
In SMPALS, until $\|\mB-\mD\mS \|_F^2 \geq 0.01\|\mB\|_F^2$ is reached, the coupling strength $\lambda$ increases by a multiplicative constant $p$.  
We chose a relatively aggressive choice for the increase $p=1.1$ for third-order tensors. For matrices, even a higher value of $p=1.5$ gave good results and allowed faster convergence. 

\paragraph*{Convergence of proposed algorithms}
 Although it is difficult to asses the convergence of MPALS, results for
SMPALS and Flex-MPALS can be derived. The convergence of SMPALS is guaranteed if $\lambda$ is
allowed to grow to infinity. Here is a sketch of the proof. 
The update of $B$ can be written as
follows; see~\eqref{flex-updt}: 
\begin{equation}
    \mB = \mD\mS + O\left( \frac{1}{\lambda} \right). 
\end{equation} 
This means that $\mB$ gets arbitrarily close to $\mD\mS$ as $\lambda$ goes to infinity. 
This implies that, for $\lambda$ sufficiently large, $\mS$ and $\mB$ will no
longer be modified: if $\mB$ does not change sufficiently, $\mS$ does not
change because of the discrete nature of the problem. 
In the mean time, the updates of $\mA$ and $\mC$, that use $\mD\mS$, decrease the
cost function and converge to a stationary point of the corresponding objective
function (for $\mS$ fixed)~\cite{GS00}. 

The convergence of Flex-MPALS, \textit{i.e.} with a fixed $\lambda$ and no
replacement $\mB$ by $\mD\mS$, is also guarantied for a normalized dictionary. Indeed, Flex-MPALS is a
block-coordinate descent algorithm where the blocks $\mA,\mB,\mC$ are updated using
an optimal least squares estimate, while the estimated $\mS$ in Flex-MPALS minimizes $\|\mB - \mD\mS
\|_F^2$ if the atoms of the dictionary have unit norm. Therefore the cost function decreases
at each step of Flex-MPALS hence convergence to some value (since it is bounded
below by zero).

\subsection{Continuous approaches}

As explained above, the original optimization problem underlying the DCPD model is combinatorial because of the $\ell_0$ pseudo-norm and the fact that $\mS$ is binary. 
To develop continuous algorithms such as gradient descent, 
a first step is to derive a continuous relaxation for the DCPD formulation. 
The first relaxations that comes to mind when working on $\ell_{0}/\ell_{p}$
pseudo-norm is a mixed norm $\ell_{1}/\ell_{2}$ which encourages sparsity column-wise. However in the DCPD model, $\mS$ can only have one non-zero coefficient per column, which makes a $\ell_{1}/\ell_{2}$ based optimization difficult to tune. 
Rather than mixed norm, we choose to use the $\ell_1$ norm as a sparsity enhancing penalization on all the entries of $\mS$, but under the constraint that the $\ell_2$ norm of the columns of $\mS$ are set to one (which is not convex). 
Indeed, if $\mathcal{B}(1,\ell_2)$ is the unit ball of the $\ell_2$ norm, the solutions to 
\begin{equation}
\underset{x\in{\mathcal{B}(1,\ell_2)}}{\text{argmin}} \|x\|_1
\end{equation}
are exactly located on the coordinate axes, which is also the constraint imposed on the columns of $\mS$. In light of this remark, solely imposing a $\ell_1$ penalization on coefficients of $\mS$, that could be easily done with for instance ADMM, may not be sufficient to ensure the high level of column sparsity that is sought. 

We chose to use the fast gradient from Nesterov~\cite{nesterov1983method} which can be understood as a proximal gradient descent with averaged steps. Although our problem is not convex, this method can still be applied and has been shown to work well in the non-convex case~\cite{ghadimi2016accelerated}. Fast gradient is used to solve the following subproblem in $\mS$  
\begin{equation}
\left\{
\begin{array}{l}
\min_{\mS} \gamma(\mS) = \frac{1}{2}\| \tT - \left(\mA\tensp\mD\mS\tensp\mC \right) \tens{I}_R \|_F^2 + \delta \|S\|_1 \\ 
\text{ such that } 
\|\vect{s}_i\|_2^2 = 1~~ \forall i\leq R, ~~ \mS \geq 0, 
\end{array}\right.
\label{costcon}
\end{equation} 
for a given parameter $\delta$ fixed by the user, and where  $\vect{s}_i$
 is the $i^{th}$ column of $\mS$. 
The non-negativity constraint on $\mS$ makes $\gamma$ differentiable with respect to $\mS$:
\begin{equation}
\frac{\partial \gamma}{\partial \mS} = \mD^T\mD\mS\left(\mA^T\mA\hadam\mC^T\mC\right)-\mD^T\matr{T}_2\left(\mA\kr\mC\right) + \delta \matr{1}_{d\times R}
\end{equation}
and the gradients of \eqref{costcon} with respect to $\mA$ and $\mC$ can be found for instance in \cite{royer2011computing} (we set $\mB=\mD\mS$) if an all-at-once optimization is sought.
The normalization constraint on the columns of $\mS$ is imposed by
normalization of $\mS$ at each iteration.  To avoid all values of a
column of $\mS$ to be non-positive, which would make the normalization
meaningless, the gradient step is constrained as follows to ensure all columns of $\mS$
after the gradient update have at least one positive entry:
\begin{equation}\label{safety}
    \text{step} = \min\left( \frac{1}{\epsilon_S},\min_j
\max_{i|g_{S}(i,j)>0} \frac{S(i,j)}{g_{S}(i,j)}-10^{-12} \right)
\end{equation}
where $\epsilon_S$ is the Lipschitz constant and $g_S$ is the gradient of the
cost function with respect to $\mS$. $M(i,j)$ refers to the entry of matrix $\mM$
indexed by $(i,j)$. 

The global optimization procedure we suggest is therefore a  mixture  of ALS and fast gradient, denoted ALS-FG and summarized in Algorithm \ref{ALS-FG}.
\begin{algorithm}
\begin{algorithmic}
\STATE \textbf{INPUT: } array $\tT$, factors $\mA$, $\mC$, initial scores $\mS$, dictionary $\mD$, regularization parameter $\delta > 0$, 
fast gradient parameter $\alpha \in ]0,1]$. 
 \STATE{~~\ul{step size computation}:\\
 $\epsilon_S =$ product of squared largest eigenvalues \\ of $ \mD^T\mD $ and $\left( \mA^T\mA\hadam\mC^T\mC \right)$}
  \WHILE{convergence criterion is not met}  
 \STATE{~~ \ul{gradient computation}:\\
 $\vect{g}_S=\mD^T\mD\mS\left(\mA^T\mA\hadam\mC^T\mC\right)-\mD^T\matr{T}_2\left(\mA\kr\mC\right) + \delta \matr{1}_{d\times R}$}
 \STATE{~~ \ul{gradient descent  correction:\\  
 }
 $\mS_{old}=\mS$ \\
  $\mS = \max(0,\mS - \text{step } \vect{g}_S) $ where step is computed as in
 \eqref{safety} to guarantee $\vect{s}_i \neq 0$ for all $i$  \\
  normalize columns of $\mS$ using the $\ell_2$ norm \\
  $\alpha_{old} = \alpha$ \\
 $\alpha= \frac{1}{2}(-\alpha_{old}^2+\sqrt{\alpha_{old}^4+4\alpha_{old})}$ \\
 $\beta = \frac{\alpha_{old}(1-\alpha_{old})}{(\alpha_{old}^2+\alpha)}$}
 \STATE{~~ \ul{update}:\\
 $\mS = \mS + \beta (\mS - \mS_{old}) $    \\
 $\mB=\mD\mS$}
\ENDWHILE 
 \STATE{\textbf{OUTPUT: } Estimated factor and scores $\mB$ and $\mS$.}

\end{algorithmic}
\caption{Fast Gradient for estimating $\mS$}
\label{ALS-FG}
\end{algorithm}
An important remark is that this algorithm is to be used inside an outer loop consisting of least squares problems. To ensure a smooth transition from the unconstrained problem to enforcing the dictionary, the penalty coefficient $\delta$ is linearly increased up from zero to a maximum value at the last outer iteration specified by the user. 
We observed that using the fast gradient improves convergence speed with respect to a simple coordinate descent, but other methods can also be implemented to solve the subproblem. 
Again, a flexible version of ALS-FG tackling optimization problem \eqref{flexlulu} can be derived from Algorithm \ref{ALS-FG} by adapting the gradient of $\gamma$ with respect to $\mS$ and using $\mB$ as a variable. 

\paragraph{Stopping criterion} Choosing the stopping criterion is application-dependent, and we let this parameter be tuned by interested users. Nevertheless, a baseline that we used in simulations below is to compute the residual error $E_i$ at iteration $i$ of the global optimization procedure, every few iterations, and to check whether the error is still decreasing enough by computing $\frac{|E_i - E_{i-1}|}{E_{i}}$. We set the number of iterations of the fast gradient inside ALS-FG to 10.

\paragraph{Normalization} An inherent ambiguity of the CPD model which is also present in the DCPD model is the scaling ambiguity, that is, the norm of columns of the factors is not determined solely by the model. It is possible to fix the norm of factors in both MPALS and ALS-FG by normalizing the columns of factor $\mA$ after each update of this factor, but this is not mandatory. 

\paragraph{Initialization} Because the global optimization problems underlying the DCPD model and its flexible counterparts are non-convex and the proposed optimization algorithms work locally, it is \textit{a priori} crucial to use a good initial point. 
If any information is available on the factors like non-negativity, it should be used in the initialization procedure. However, in the general case where no such information is available, one possible strategy is to compute an unconstrained CPD and use the obtained factors as an initialization for MPALS and ALS-FG. 
In the experiments below, we observed that indeed MPALS is highly sensitive to initialization. In particular, using a single random initialization leads in many cases to poor results for third-order tensors. Note however that is worked well for hyperspectral images. 



\section{Experiments on simulated and hyperspectral data}\label{exp-sec} 
\subsection{Simulated data experiments}
\subsubsection{Methodology} 

A critical question yet unanswered is whether using a dictionary within the
tensor factorization model improves on identification error,  which  is the
percentage of columns of $\mB$ correctly matched to atoms of $\mD$, with respect to
only projecting the result of an unconstrained factorization on the set of
atoms. Because information is added about factor $\mB$ in the model, the
identification error is expected to be smaller by using DCPD, in particular if
the simulations are made challenging in terms of signal to noise ratio (SNR),
rank under-estimation and over-estimation, or conditioning of the factors.

If the SNR is high,  if  the tensor is well-conditioned and  if  the rank is known, 
unconstrained CPD may provide relatively low identification error because of
the generic uniqueness of the solution.  For this reason, we choose difficult scenarios to compare DCPD to unconstrained CPD with outputs identified using a projection on the set of atoms. The dictionary used is constituted of atoms as follows: 
\begin{equation}
\begin{array}{ll}
\vect{d}^i_k = & | a_k \vect{u} + b_k + \nu_k^i \text{sinc}(\frac{\pi}{6}\vect{u}-e_k^i) \\
& + \mu_k^i \left(\text{tri}(\vect{u}-f_k^i+2) - \text{tri}(\vect{u}-f_k^i-2)\right) |
\end{array}
\end{equation} 
where $\vect{u}=\{1,\dots,L\}$, $a_k, b_k, \nu_k^i, \mu_k^i, e_k^i, f_k^i$ are
realizations of uniformly distributed random variables on respectively
$[-1/L,1/L], [-1,1], [-1/4, 1/4]$, $[-1/4, 1/4]$, $\{1,\dots,L\}$, $\{1,\dots,L\}$ and $\text{tri}$ is
a triangular pulse function, of support $[-2,2]$. Atoms are then normalized
with the $\ell_2$ norm. Index $k\in\{1,\dots,c\}$ is a class index, $c$ being
the number of classes, and $i\in\{1,\dots,\frac{d}{c}\}$ indexes individual atoms
in each class. This means that atoms of the dictionary are a sum of a
linear baseline common to groups of atoms and of two individual features; see Figure~\ref{fig1} for an
example.
This design is meant to bluntly echo spectral signatures of materials in
hyperspectral imaging, where only a small number of features discriminate materials from the same family.  For the experiments below, a single realization of the
dictionary was used, with no collinear columns (as to satisfy Proposition
\ref{prop2}), but it could not be checked whether its spark satisfied
the condition stated in Proposition \ref{prop1}. However, $\text{spark}(\mD)$
is equal to the dimension of the atoms plus one with probability one, which
ensures that the parameters of the DCPD models used in these simulations are
identifiable almost surely.   
\begin{figure}
\centering
\includegraphics[width=0.5\textwidth]{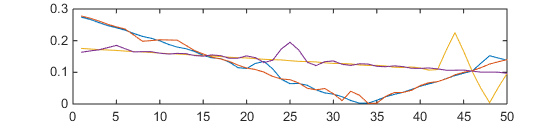}
\caption{Four dictionary atoms belonging to two different classes. \label{fig1}
\vspace*{-2em}} 
\end{figure}

The data are generated as follows: each entry of the factors $\mA$ and $\mC$ is drawn according
to a unitary centered Gaussian distribution, and the columns of $\mA$ and $\mC$ are then
normalized with the $\ell_2$ norm, which makes these factors well-conditioned
matrices. Matrix $\mS$ is  fixed in all experiments so that only one
atom of each class is contained in the columns of $\mB$.   White Gaussian
noise of fixed variance $\sigma$ is added to the data tensor.

Given the structure of the dictionary, the correlation among atoms of the same class can be quite high.  The dictionary is constituted of a large number of atoms ($d=1000$) subdivided in $c=50$ groups. 
The dimensions of the tensors are  $20 \times 50 \times 7$,  yielding an inter-group correlation of atoms that reaches $0.999$ at most. 
We choose a noise variance of $\sigma = 0.01$ which leads to an average SNR of
 about $11.5$dB,  and set the rank to $R=10$. 

 As explained above, the simulations focus on the impact on identification
performance when the rank is wrongly estimated, and when one of
the factor matrix, here $\mC$, is ill-conditioned. Therefore, we chose to grid
over an estimated rank $Re$ ranging from $7$ to $13$ with a good conditioning of
$\mC$. The impact of the conditionning of $\mC$ is studied through a grid on a
parameter $\rho$ such that given a randomly-drawn well conditioned $\mC^{(0)}$, 
it is modified as follows 
\begin{equation}\label{def-rho}
    \mC \leftarrow \mC^{(0)}\left(\rho\matr{I}_R + \frac{(1-\rho)}{R}\matr{1}_{R\times R}\right),
\end{equation}
and is then normalized. 
When $\rho$ is equal to 0, $\mC$ has column rank equal to 1, while its entries
follow i.i.d. Gaussian distributions when $\rho$ is 1. The estimated rank is set equal to the true rank when studying the
ill-conditioned case.  The number of realization in each setting is set to
$N=100$.

The CPD algorithm used as a baseline for comparison and for initialization is
the NWAY toolbox \cite{andersson2000n}. We compare projected results of the
NWAY toolbox with MPALS, SMPALS, Flex-MPALS and ALS-FG.  Coupling strength in
Flex-MPALS is fixed to $\lambda=0.04$ in this experiment. The value 0.04 provided spectra close to the
dictionary, while allowing some flexibility. In practice,  $\lambda$ could be
tuned for example in order to achieve a given relative error between the
columns of $B$ of the corresponding matched atoms of the dictionary (for
example 1\% relative error).  The number of iterations is fixed to a maximum of 1000, 
except for the rand-MPALS  (MPALS initialized randomly)  where the maximum iteration number is 5000.
Algorithm also stops if the stopping criterion mentioned in the previous
section goes below  $10^{-4}$. 

Finally, in both cases we use the NWAY toolbox for initialization, except for the randomly initialized MPALS which is initialized with factors following the same distribution as the true factors.

\subsubsection{Results}

 Figures \ref{fig-R} and \ref{fig-rho} report the mean identification
rate that measures the percentage of well-matched columns between $\mB$ and
$\widehat{\mB}$. In the case where the rank and the estimated rank are
miss-matched, the unmatched factors of $\mB$ and $\widehat{\mB}$ are considered
miss-matched. This leads to a best possible identification rate given by
$\frac{|Re-R|}{\max(Re,R))}$ and refered to as ``oracle''. Figure
\ref{fig-rho-err}
reports the relative mean square error on factor $\mB$ given by
$\mathds{E}\left[\frac{\|\mB -
\widehat{\mB}\Pi\|_F^2}{\|\mB\|_F^2}\right]$ where $\Pi$ is a
permutation matrix computed to match the estimated factors to the true factors.
Table \ref{table_t} reports the mean run time of
each algorithm in the first experiment when $R$ equals $Re$.  

\begin{figure}
\centering
\includegraphics[width=0.45\textwidth]{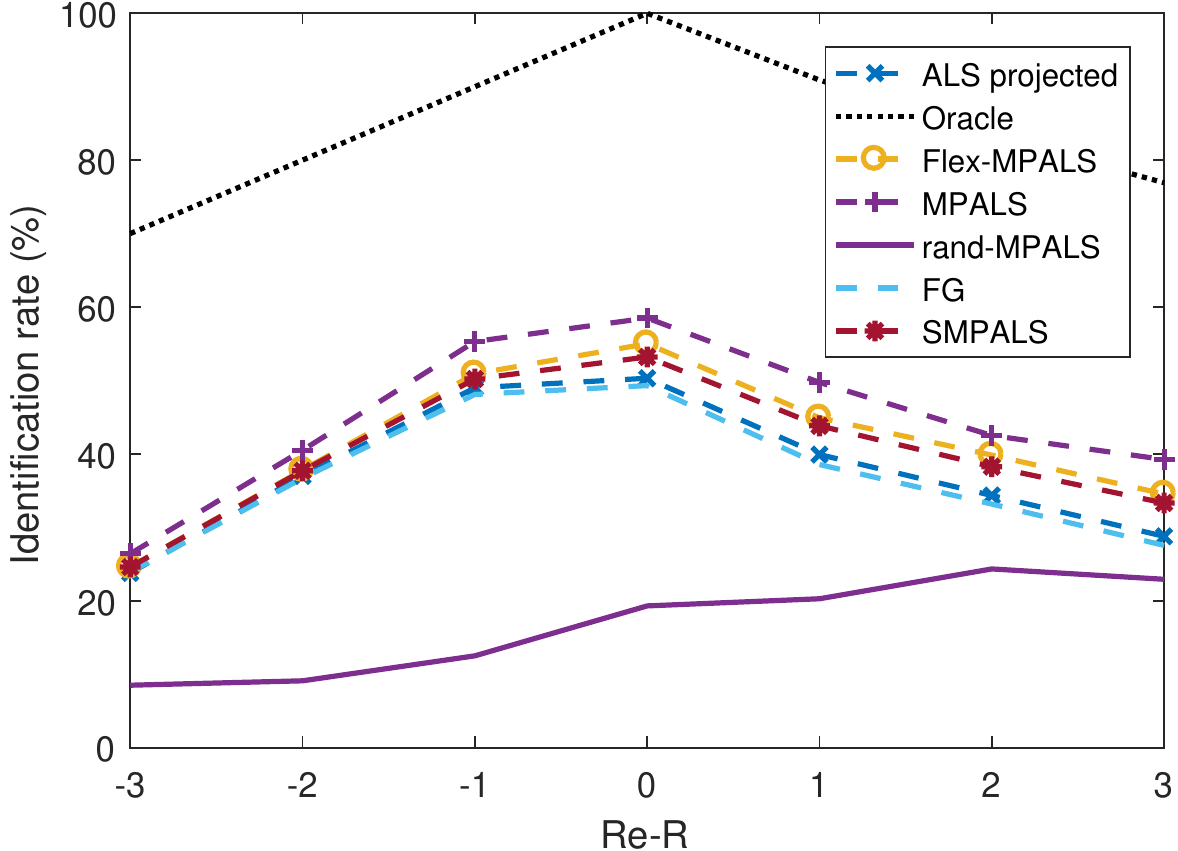}
\caption{Identification rates of various methods for varying estimated rank.
True rank $R$ is 10. The oracle is the best possible identification rate.}\label{fig-R}
\end{figure}

\begin{figure}
\centering
\includegraphics[width=0.45\textwidth]{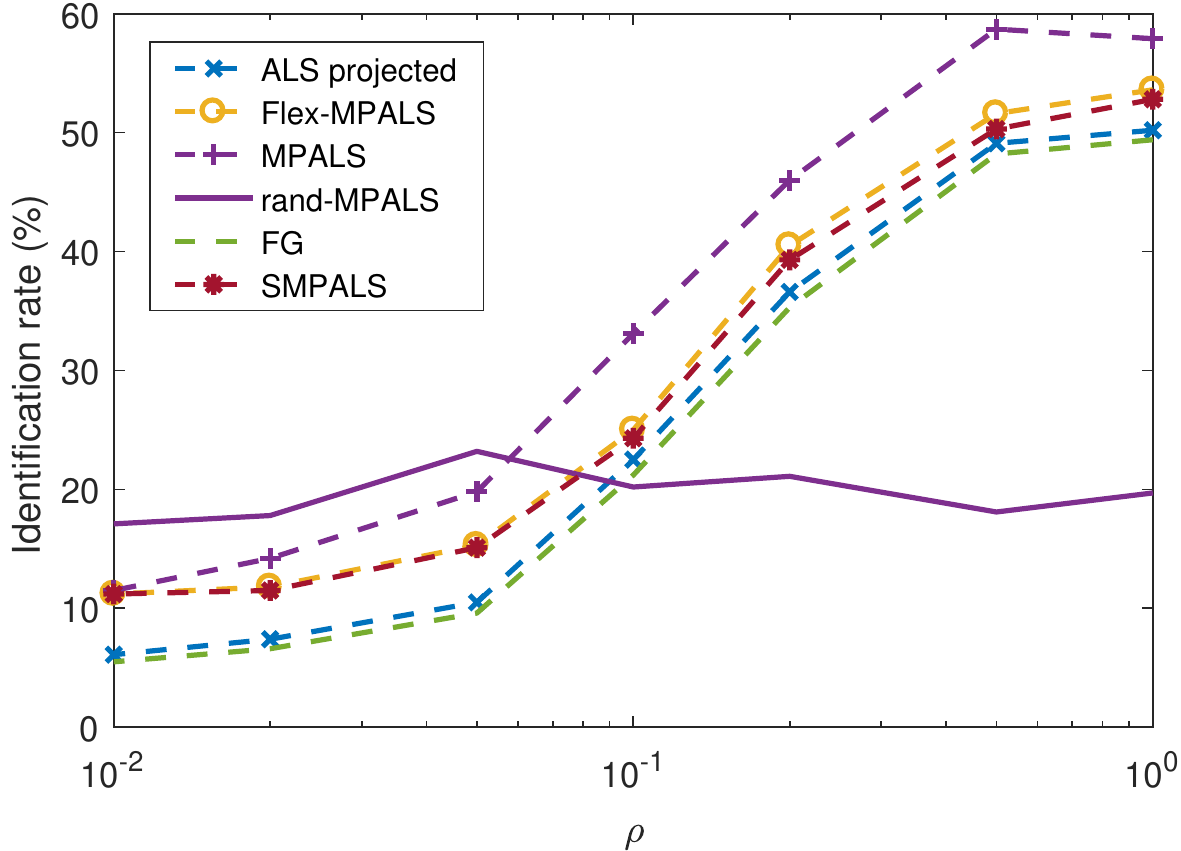}
\caption{Identification rates of various methods for various conditionning of
    factor matrix $\mC$, tuned using parameter $\rho$ defined in \eqref{def-rho}.
True rank $R$ is 10.\vspace*{-2em}}\label{fig-rho}
\end{figure}

\begin{figure}
\centering
\includegraphics[width=0.45\textwidth]{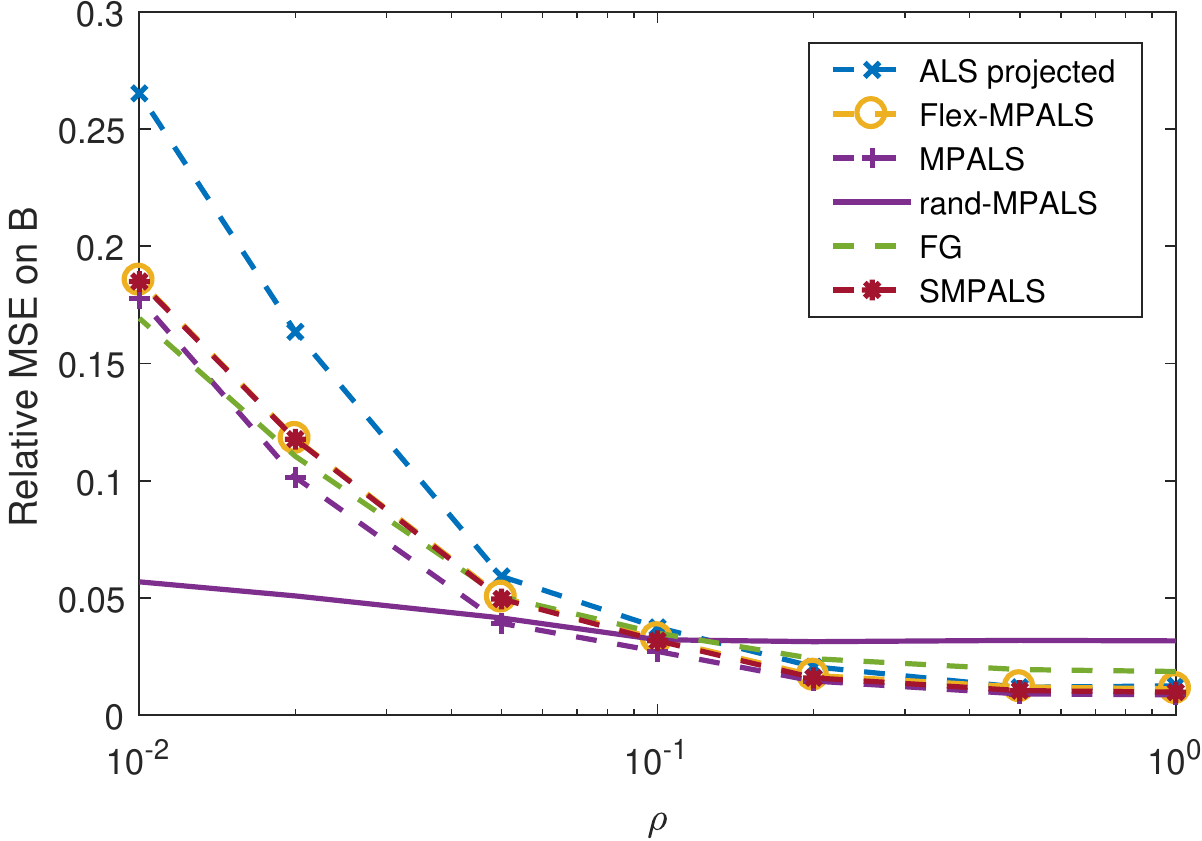}
\caption{Relative mean square error on factor $\mB$ that quantifies the
distance between the selected atoms and the true ones, as a function of $\rho$.}\label{fig-rho-err}
\end{figure}

\begin{table}
\centering 
\begin{tabular}{cccc}
    Alg. & ALS & MPALS & Flex-MPALS \\
    
    \hline
    Time (s) & 0.21 & 0.56 & 0.72 \\ 

    Alg.     & r-MPALS & SMPALS & FG \\
    \hline
    Time (s) & 2.75 & 0.73 & 7.62  
\end{tabular}
\caption{Average runtime of various algorithms over $N=100$ realization when
$R=Re$. \vspace*{-2em}}\label{table_t}
\end{table}

 As expected the identification rate increases with respect to projected
NWAY when using MPALS and its variants, with a gain ranging from a few percent
to over ten percent. This is a direct but non-obvious
consequence of the increased estimation performances. It can be seen that MPALS performs the best
in almost all cases, especially when the factor $\mC$ is ill-conditioned. However,
the wrong estimation of the rank does not impact the gain in performance between
projected CPD and DCPD. Notably, SMPALS performs slightly worse than
the naive MPALS. 

An important result to observe is the steady performance of the randomly
initialized MPALS over various values of $\rho$, both in terms of mean square
error and identification rate. Because DCPD is supposed to be
identifiable even  when  $\rho$ tends to $0$, the performance of MPALS should not
depend too much on it, as observed with the random initialization. However,
when the unconstrained CPD model is used as
an initialization method, the results of MPALS do depend heavily on
$\rho$. This shows that MPALS and all other proposed algorithm are very sensitive to initialization, a
fact also supported by the experiment in the next section. As a consequence, a
good initialization method is crucial for the performance of MPALS, this is a topic for further research.  

 Even though the identification rates evolve similarly for all methods in
the ill-conditioning experiment, the mean square error on $\mB$ tells a
different story. Since many atoms in the dictionary are very correlated, it is
reasonable to assume that the mean square error on $\mB$ could be small even
though all atoms are wrongly identified. It can be observed on Figure~\ref{fig-rho-err} that on average, all proposed algorithms for computing the DCPD
significantly outperform the projected ALS method when $\mC$ is
ill-conditioned. As a conclusion, the atoms picked by MPALS and its variants
are on average much closer to the true atoms than the ones picked using a
projected ALS method. This is especially true for the randomly initialized
MPALS. 

Finally, from this experiment, we observe that computing the DCPD with a
combinatorial greedy approach performs better than a continuous approach, since
the performance of ALS-FG is relatively poor. We believe the reason  for this
 is that the
greedy approaches can escape the basin of
attraction of a local minimum during the projection step, which a (standard) continuous approach will not
be able to do. In fact, we observed that, in most cases, once an entry is set
to zero by ALS-FG, it remains zero in the course of the iterations, which is
not the case for MPALS and its variants.

\subsection{Spectral Unmixing with the self-dictionary model}

For this communication on dictionary based tensor factorization models, we choose to try out the different models and algorithms on a well-known sparse coding problem, namely spectral unmixing under the pure-pixel assumption. 
As explained in Section \ref{sec-model}, 
the data itself can be used as the dictionary, and models described in this paper for matrices can be straightforwardly compared with state-of-the-art sparse coding approaches for spectral unmixing, namely the succesive projection algorithm (SPA)~\cite{MC01}, the successive nonnegative projection algorithm (SNPA)~\cite{G14b}, the Hierarchical Clustering algorithm (H2NMF)~\cite{GDK14} and FGNSR~\cite{Gillis2016fast}.  
Below, the spectra contained in the data are used as atoms, so that the dictionary has row dimension equals to the number of spectral bands ($\approx 150$), and number of atoms equals to the number of pixels in the hyperspectral images (HSI) ($\approx 10^5$). 

The two data sets that will be used to compare DCPD with these methods are
Urban and Terrain. 
These two HSI satisfy approximately the pure pixel assumption. The dimensions of the HSI after vectorization of the pixel dimensions are respectively $307^2  \times 162 $ and $(500\times 307) \times 166 $. We chose to decompose the two data sets with respective ranks $R_{urban}=6$ and $R_{terrain}=5$ according to what has been done previously in the literature; see~\cite{Gillis2016fast} and the references therein. 
The maximal number of iterations of the DCPD algorithms is set to 50. 
Also because it does not perform well, results for the continuous fast gradient algorithm are not presented below.

We initialize the MPALS algorithm and its variants with each state-of-the-art
algorithm with 10 additional A-HALS steps. For instance MPALS and SMPALS
initialized with SPA are respectively denoted as d-SPA and ds-SPA. Results are
presented in Table \ref{tab5}.  Additionally, considering the performance of
projected ALS in the multiway experiment above, the performance of projected
NMF is also shown, for instance nmf-SPA refers to the A-HALS algorithm
initialized with SPA, which endmembers output are projected onto the data
points and which abundances are  re-estimated  using non-negative least squares.  

\begin{table}[h!]\small
    \begin{center}
\begin{tabular}{|c|c|c|c|c|}
\hline
 &  \multicolumn{2}{c|}{Urban HSI}  & \multicolumn{2}{c|}{Terrain HSI} \\
 & time(s) & Rel.err. & time(s) & Rel.err. \\ 
\hline SPA & 0.2 & 9.58 &  0.3 & 5.89 \\
	   d-SPA & 20 & 4.57 & 48 & 3.50 \\
	   ds-SPA & 21 & 4.67 &  58 & 3.37 \\
       nmf-SPA & 13 & 5.37 & 18 & 3.75 \\
\hline VCA & 2.0 & 13.07 &  1.6 & 18.61 \\
	   d-VCA & 16 & 4.73 & 46 & 3.29 \\
	   ds-VCA & 20 & 4.66 &  35 & 3.37 \\
       nmf-VCA & 12 & 6.63 & 17  & 4.27 \\
\hline SNPA & 9.5 & 9.63 & 13 & 5.76\\
	   d-SNPA & 17 & 5.02 & 47 & 3.78\\
   	   ds-SNPA & 21 & 4.91 & 59 & 3.88\\
       nmf-SNPA & 13 & 5.14 & 19  & 4.47 \\
\hline H2NMF & 8 & 5.81 & 11.7 & 5.09\\
	   d-H2NMF & 17 & \textbf{4.05} & 47 & 3.49\\
   	   ds-H2NMF & 21 & \textbf{4.05} & 64 & 3.49\\
       nmf-H2NMF & 11 & 5.03 & 17 & 4.51 \\
\hline FGNRS-100 & 2.2 & 5.58 & 1.6 & 3.34\\
	   d-FGNSR-100 & 17 & 4.47 & 45 & \textbf{3.01} \\
	   ds-FGNSR-100 & 20 & 4.47 & 58 & \textbf{3.01} \\
       nmf-FGNSR-100 & 12 & 6.37 & 19 & 3.31 \\
\hline
\end{tabular} 
\vspace{1em}
\caption{Reconstruction error $(\%)$ for the Urban and Terrain airport HSI. Best results are highlighted in bold. 
Computation time does not include 500 iterations of nonnegative least squares
update for the abundances after each method (which takes about 10 seconds on
average).\vspace*{-2em}}
\label{tab5}
\end{center}  
\end{table}

The relative reconstruction error is used as a performance metric, since it is
not possible to assess the identification performance without a ground-truth.
Therefore, this experiment only studies the efficiency of the MPALS algorithm
for minimizing the objective function. 

In all cases, using either SMPALS or MPALS improves on the initial values for identified spectra. As shown already in the previous experiment, initialization plays an important role in the final reconstruction error, but at least using the DCPD model always refines the solutions, even when it is initially low like for FGNSR-100. 
Moreover, the DCPD algorithms are not excessively costly with respect to other state-of-the-art methods. There is no significant difference between MPALS and its smooth counterpart. Figure \ref{fig-abund} shows the estimated abundances and spectra with the H2NMF-MPALS algorithm for both Urban and Terrain, and materials can be identified for each component by the user (no ground truth is available). Finally, Figure  \ref{fig-spectra} shows the reconstruction error map on Urban and Terrain HSIs for the H2NMF-MPALS. 
Clearly the remaining error is not distributed as an i.i.d. Gaussian noise, which means the Frobenius norm used as the distance metric is not adapted. Also, most of the remaining error comes from rooftops and roads in the Urban HSI, a zone probably corrupted by large spectral variability. 

Because Flex-MPALS does not require exactly that the pure pixel assumption is verified, we do not include it in this simulation. Indeed, using a reconstruction error criteria, Flex-MPALS would outperform the other methods, but that would not mean that obtained abundances and endmembers are better. On the other hand, the flexible model better tackles spectral variability since it can modify to some extent the spectra extracted from the pure pixels.\footnote{Interested readers will find the code for matrix Flex-MPALS online.}

\begin{figure*}
\centering
\includegraphics[width=\textwidth]{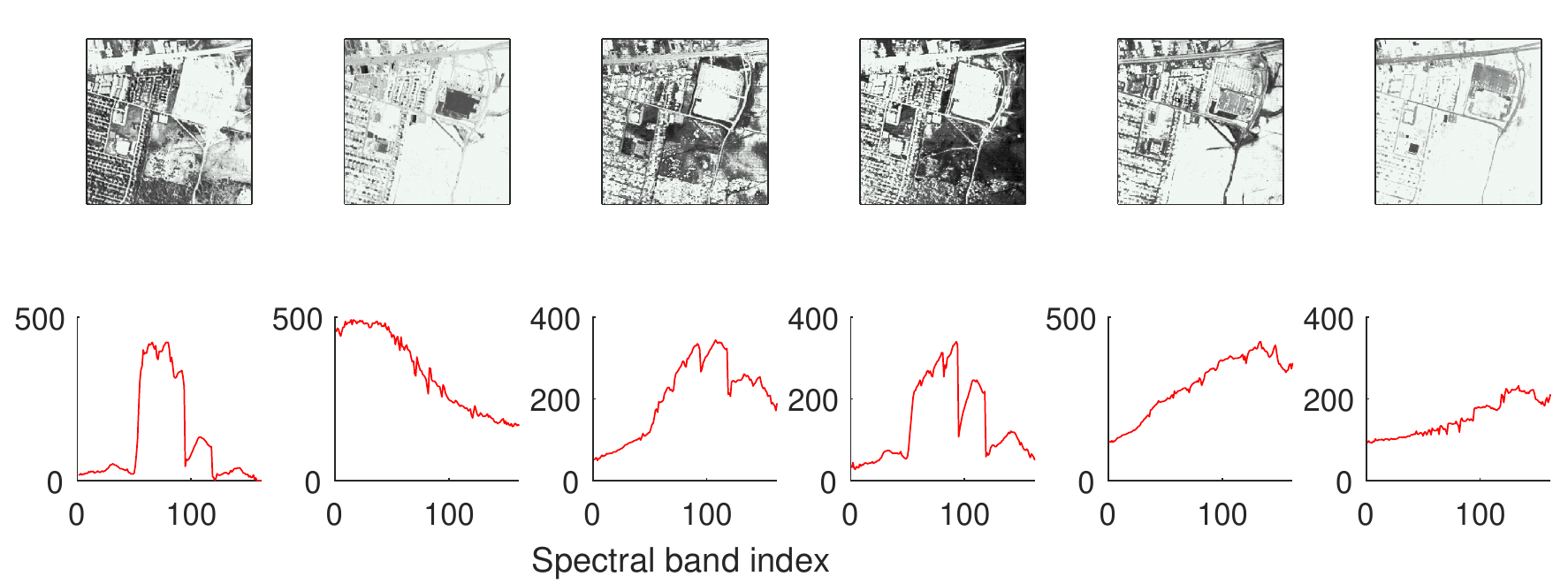}\\
\includegraphics[width=\textwidth]{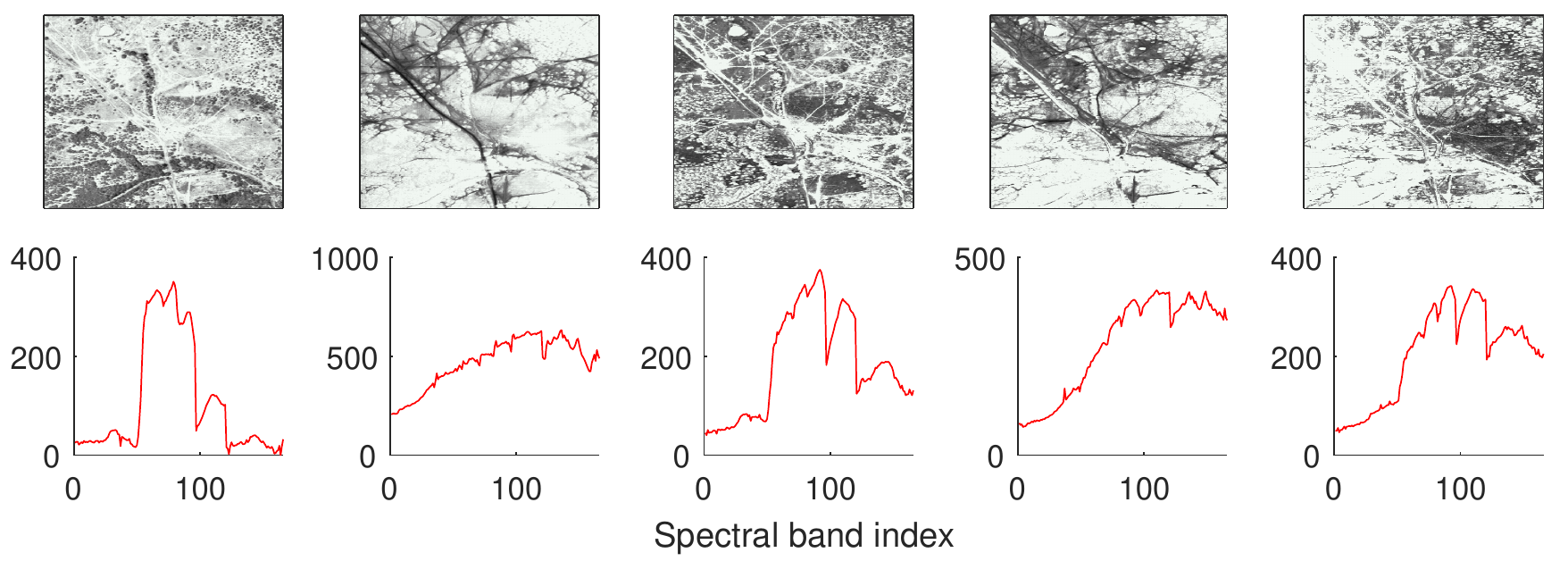}
\caption{Estimated spectra on the Urban HSI with d-H2NMF (top) and on the Terrain HSI with d-H2NMF (bottom).}
\label{fig-abund}
\end{figure*}

\begin{figure*}
\centering
\includegraphics[width=0.4\textwidth]{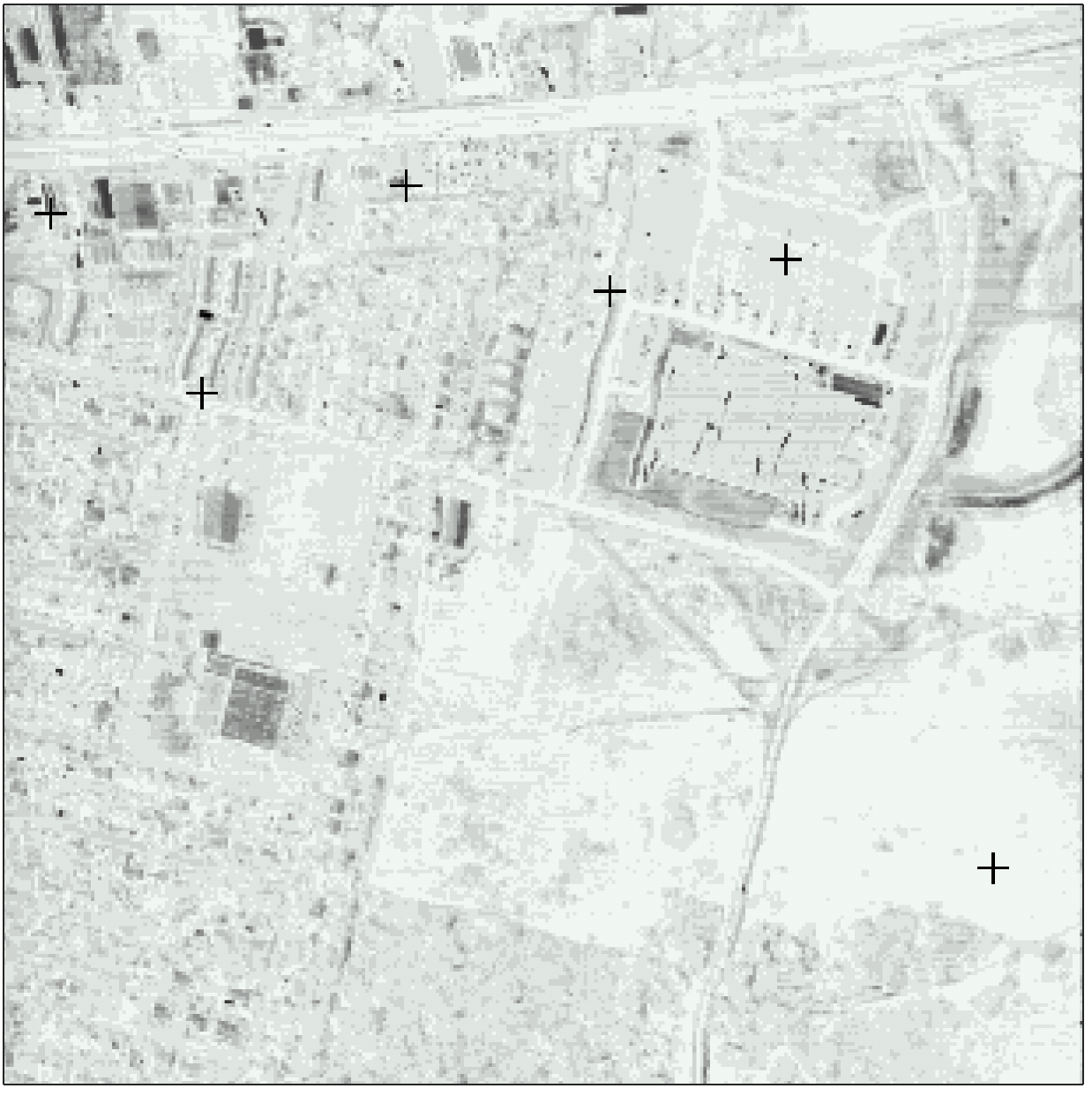}
\includegraphics[width=0.419\textwidth]{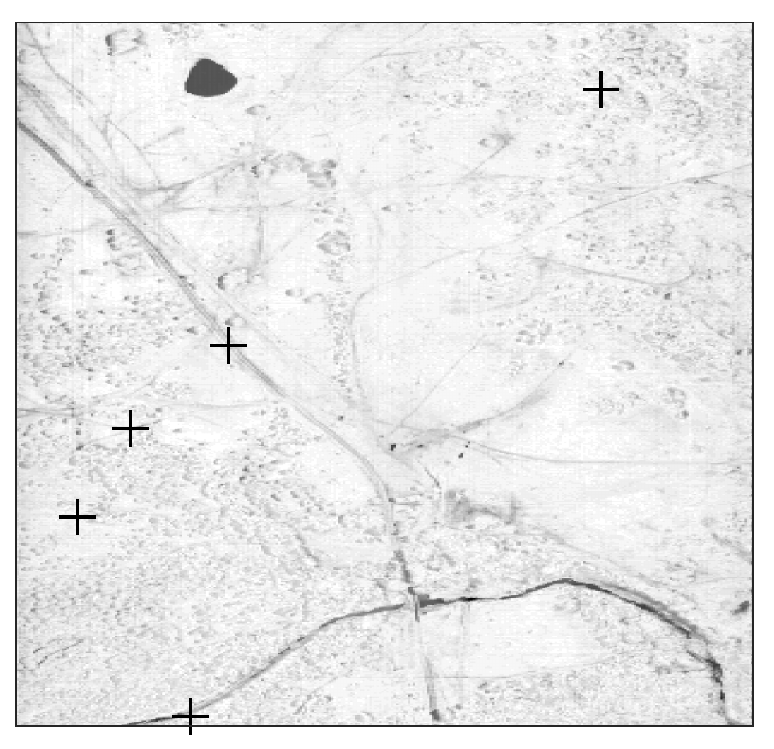}\\
\caption{Residual error maps on the Urban (left) and Terrain (right) HSIs for d-H2NMF. The black crosses mark the selected pure pixels.}
\label{fig-spectra}
\end{figure*}



\section{Conclusions}

To  jointly separate and identify sources using tensor canonical polyadic
decomposition and a known dictionary, we introduced in this paper the DCPD model along with some flexible variants. Identifiability of the DCPD model parameters was discussed in both the matrix and the higher-order tensor cases. 
We proposed both greedy and continuous algorithms for DCPD and compared them on synthetic data sets and hyperspectral images. 
We observed that 
(i) the greedy algorithms provide in most cases better results, and that 
(ii) our DCPD formulation improves results with respect to 
standard CPD on the synthetic data sets, 
and  with respect to spectral unmixing approaches based either on convex
relaxation or geometric methods. 
 A particularly promising direction for further research would be the 
design of efficient initialization schemes for the proposed greedy algorithms
which are particularly sensitive to initialization. 


An interesting continuation of this work would  also  be to compute the Cram\'er Rao bounds of the parameters in the DCPD to formally support what was presented in the simulation section. Moreover, in most application cases the dictionary is either unknown, or describes only a subset of the components. Thus a partial DCPD model should be investigated, as well as a dictionary learning scheme for tensors following the DCPD model.


\section*{Acknowledgment}
The authors would like to thank Pierre Comon, Rodrigo Cabral Farias and Miguel A.\@ Veganzones for early fruitful discussions on this work.


\ifCLASSOPTIONcaptionsoff
  \newpage
\fi

\bibliographystyle{IEEEtran}

\begin{thebibliography}{10}
\providecommand{\url}[1]{#1}
\csname url@samestyle\endcsname
\providecommand{\newblock}{\relax}
\providecommand{\bibinfo}[2]{#2}
\providecommand{\BIBentrySTDinterwordspacing}{\spaceskip=0pt\relax}
\providecommand{\BIBentryALTinterwordstretchfactor}{4}
\providecommand{\BIBentryALTinterwordspacing}{\spaceskip=\fontdimen2\font plus
\BIBentryALTinterwordstretchfactor\fontdimen3\font minus
  \fontdimen4\font\relax}
\providecommand{\BIBforeignlanguage}[2]{{%
\expandafter\ifx\csname l@#1\endcsname\relax
\typeout{** WARNING: IEEEtran.bst: No hyphenation pattern has been}%
\typeout{** loaded for the language `#1'. Using the pattern for}%
\typeout{** the default language instead.}%
\else
\language=\csname l@#1\endcsname
\fi
#2}}
\providecommand{\BIBdecl}{\relax}
\BIBdecl

\bibitem{comon2010handbook}
P.~Comon and C.~Jutten, \emph{Handbook of Blind Source Separation: Independent
  component analysis and applications}.\hskip 1em plus 0.5em minus 0.4em\relax
  Academic press, 2010.

\bibitem{SidiBG00:ieeesp}
N.~D. Sidiropoulos, R.~Bro, and G.~B. Giannakis, ``Parallel factor analysis in
  sensor array processing,'' \emph{IEEE Trans. Sig. Proc.}, vol.~48, no.~8, pp.
  2377--2388, Aug. 2000.

\bibitem{Bro1998}
R.~Bro, ``\textit{Multi-way Analysis in the Food Industry: Models, Algorithms,
  and Applications},'' Ph.D. dissertation, University of Amsterdam, The
  Netherlands, 1998.

\bibitem{veganzones2016nonnegative}
M.~A. Veganzones, J.~E. Cohen, R.~Cabral~Farias, J.~Chanussot, and P.~Comon,
  ``Nonnegative tensor {CP} decomposition of hyperspectral data,''
  \emph{Geoscience and Remote Sensing, IEEE Transactions on}, vol.~52, pp.
  2577--2588, 2016.

\bibitem{cichocki2015tensor}
A.~Cichocki, D.~Mandic, L.~De~Lathauwer, G.~Zhou, Q.~Zhao, C.~Caiafa, and H.~A.
  Phan, ``Tensor decompositions for signal processing applications: From
  two-way to multiway component analysis,'' \emph{Signal Processing Magazine,
  IEEE}, vol.~32, no.~2, pp. 145--163, 2015.

\bibitem{kroonenberg1983three}
P.~M. Kroonenberg, \emph{Three-mode principal component analysis: Theory and
  applications}.\hskip 1em plus 0.5em minus 0.4em\relax DSWO press, 1983,
  vol.~2.

\bibitem{papalexakis2016tensors}
E.~E. Papalexakis, C.~Faloutsos, and N.~D. Sidiropoulos, ``Tensors for data
  mining and data fusion: Models, applications, and scalable algorithms,''
  \emph{ACM Transactions on Intelligent Systems and Technology (TIST)}, vol.~8,
  no.~2, p.~16, 2016.

\bibitem{olshausen1997sparse}
B.~A. Olshausen and D.~J. Field, ``Sparse coding with an overcomplete basis
  set: A strategy employed by {V1}?'' \emph{Vision research}, vol.~37, no.~23,
  pp. 3311--3325, 1997.

\bibitem{elhamifar2012see}
E.~Elhamifar, G.~Sapiro, and R.~Vidal, ``See all by looking at a few: Sparse
  modeling for finding representative objects,'' in \emph{Computer Vision and
  Pattern Recognition (CVPR), 2012 IEEE Conference on}.\hskip 1em plus 0.5em
  minus 0.4em\relax IEEE, 2012, pp. 1600--1607.

\bibitem{G14b}
N.~Gillis, ``Successive nonnegative projection algorithm for robust nonnegative
  blind source separation,'' \emph{SIAM J. Imaging Sci.}, vol.~7, no.~2, pp.
  1420--1450, 2014.

\bibitem{ND05}
J.~Nascimento and J.~Dias, ``Vertex component analysis: a fast algorithm to
  unmix hyperspectral data,'' \emph{{IEEE} Transactions on Geoscience and
  Remote Sensing}, vol.~43, no.~4, pp. 898--910, 2005.

\bibitem{KSK12}
A.~Kumar, V.~Sindhwani, and P.~Kambadur, ``Fast conical hull algorithms for
  near-separable non-negative matrix factorization,'' in \emph{Int. Conf. on
  Machine Learning (ICML '13)}, 2013, vol.~28, no.~1, pp. 231--239.

\bibitem{GV14}
N.~Gillis and S.~A. Vavasis, ``Fast and robust recursive algorithmsfor
  separable nonnegative matrix factorization,'' \emph{{IEEE} Transactions on
  Pattern Analysis and Machine Intelligence}, vol.~36, no.~4, pp. 698--714,
  2014.

\bibitem{Hack12}
W.~Hackbusch, \emph{Tensor Spaces and Numerical Tensor Calculus}, ser. Series
  in Computational Mathematics.\hskip 1em plus 0.5em minus 0.4em\relax Berlin,
  Heidelberg: Springer, 2012.

\bibitem{cohen2015notations}
J.~E. Cohen, ``About notations in multiway array processing,'' \emph{arXiv
  preprint arXiv:1511.01306}, 2015.

\bibitem{kruskal1977three}
J.~B. Kruskal, ``Three-way arrays: rank and uniqueness of trilinear
  decompositions, with application to arithmetic complexity and statistics,''
  \emph{Linear algebra and its applications}, vol.~18, no.~2, pp. 95--138,
  1977.

\bibitem{domanov2013uniqueness}
I.~Domanov and L.~De~Lathauwer, ``On the uniqueness of the canonical polyadic
  decomposition of third-order tensors---{Part II}: Uniqueness of the overall
  decomposition,'' \emph{SIAM Journal on Matrix Analysis and Applications},
  vol.~34, no.~3, pp. 876--903, 2013.

\bibitem{landsberg2013equations}
J.~M. Landsberg and G.~Ottaviani, ``Equations for secant varieties of veronese
  and other varieties,'' \emph{Annali di Matematica Pura ed Applicata}, vol.
  192, no.~4, pp. 569--606, 2013.

\bibitem{Rao65}
C.~R. Rao, \emph{Linear Statistical Inference and its Applications}, ser.
  Probability and Statistics.\hskip 1em plus 0.5em minus 0.4em\relax Wiley,
  1965.

\bibitem{bro2003new}
R.~Bro and H.~A. Kiers, ``A new efficient method for determining the number of
  components in {PARAFAC} models,'' \emph{Journal of chemometrics}, vol.~17,
  no.~5, pp. 274--286, 2003.

\bibitem{da2008robust}
J.~C. da~Costa, M.~Haardt, and F.~Romer, ``Robust methods based on the hosvd
  for estimating the model order in {PARAFAC} models,'' in \emph{Sensor Array
  and Multichannel Signal Processing Workshop, 2008. SAM 2008. 5th IEEE}.\hskip
  1em plus 0.5em minus 0.4em\relax IEEE, 2008, pp. 510--514.

\bibitem{valeur2012molecular}
B.~Valeur and M.~N. Berberan-Santos, \emph{Molecular fluorescence: principles
  and applications}.\hskip 1em plus 0.5em minus 0.4em\relax John Wiley \& Sons,
  2012.

\bibitem{lee1999learning}
D.~D. Lee and H.~S. Seung, ``Learning the parts of objects by non-negative
  matrix factorization,'' \emph{Nature}, vol. 401, no. 6755, pp. 788--791,
  1999.

\bibitem{Jose12}
J.~Bioucas-Dias, A.~Plaza, N.~Dobigeon, M.~Parente, Q.~Du, P.~Gader, and
  J.~Chanussot, ``Hyperspectral unmixing overview: {G}eometrical, statistical,
  and sparse regression-based approaches,'' \emph{IEEE Journal of Selected
  Topics in Applied Earth Observations and Remote Sensing}, vol.~5, no.~2, pp.
  354--379, 2012.

\bibitem{zare2014endmember}
A.~Zare and K.~Ho, ``Endmember variability in hyperspectral analysis:
  Addressing spectral variability during spectral unmixing,'' \emph{IEEE Signal
  Processing Magazine}, vol.~31, no.~1, pp. 95--104, 2014.

\bibitem{CabrCC16:tsp}
R.~Cabral-Farias, J.~E. Cohen, and P.~Comon, ``Exploring multimodal data fusion
  through joint decompositions with flexible couplings,'' \emph{IEEE Trans.
  Sig. Proc.}, vol.~64, no.~18, pp. 4830--4844, 2016.

\bibitem{guo2012comparison}
Y.~Guo and M.~Berman, ``A comparison between subset selection and l1
  regularisation with an application in spectroscopy,'' \emph{Chemometrics and
  Intelligent Laboratory Systems}, vol. 118, pp. 127--138, 2012.

\bibitem{harshman1972parafac2}
R.~A. Harshman, ``{PARAFAC2}: Mathematical and technical notes,'' \emph{UCLA
  working papers in phonetics}, vol.~22, no. 3044, p. 122215, 1972.

\bibitem{kiers1999parafac2}
H.~A. Kiers, J.~M. Ten~Berge, and R.~Bro, ``{PARAFAC2-Part I}. {A} direct
  fitting algorithm for the {PARAFAC2} model,'' \emph{Journal of Chemometrics},
  vol.~13, no. 3-4, pp. 275--294, 1999.

\bibitem{DS03}
D.~Donoho and V.~Stodden, ``{When does non-negative matrix factorization give a
  correct decomposition into parts?}'' in \emph{In Advances in Neural
  Information Processing 16}, 2003.

\bibitem{arora2012computing}
S.~Arora, R.~Ge, R.~Kannan, and A.~Moitra, ``Computing a nonnegative matrix
  factorization--provably,'' in \emph{Proceedings of the forty-fourth annual
  ACM symposium on Theory of computing}.\hskip 1em plus 0.5em minus 0.4em\relax
  ACM, 2012, pp. 145--162.

\bibitem{ang2016non}
A.~M.~S. Ang, Y.~S. Hung, and Z.~Zhang, ``A non-negative tensor factorization
  approach to feature extraction for image analysis,'' in \emph{Digital Signal
  Processing (DSP), 2016 IEEE International Conference on}.\hskip 1em plus
  0.5em minus 0.4em\relax IEEE, 2016, pp. 168--178.

\bibitem{iordache2014collaborative}
M.-D. Iordache, J.~Bioucas-Dias, and A.~Plaza, ``Collaborative sparse
  regression for hyperspectral unmixing,'' \emph{IEEE Transactions on
  Geoscience and Remote Sensing}, vol.~52, no.~1, pp. 341--354, 2014.

\bibitem{EMO12}
E.~Esser, M.~Moller, S.~Osher, G.~Sapiro, and J.~Xin, ``A convex model for
  nonnegative matrix factorization and dimensionality reduction on physical
  space,'' \emph{{IEEE} Transactions on Image Processing}, vol.~21, no.~7, pp.
  3239--3252, 2012.

\bibitem{BRRT12}
V.~Bittorf, B.~Recht, E.~R\'{e}, and J.~Tropp, ``{Factoring nonnegative
  matrices with linear programs},'' in \emph{Advances in Neural Information
  Processing Systems (NIPS~'12)}, 2012, pp. 1223--1231.

\bibitem{GL13}
N.~Gillis and R.~Luce, ``Robust near-separable nonnegative matrix factorization
  using linear optimization,'' \emph{J. Mach. Learn. Res.}, vol.~15, pp.
  1249--1280, 2014.

\bibitem{sahnoun2017simultaneous}
S.~Sahnoun, E.-H. Djermoune, D.~Brie, and P.~Comon, ``A simultaneous sparse
  approximation method for multidimensional harmonic retrieval,'' \emph{Signal
  Processing}, vol. 131, pp. 36--48, 2017.

\bibitem{MC01}
U.~Ara\'ujo, B.~Saldanha, R.~Galv\~ao, T.~Yoneyama, H.~Chame, and V.~Visani,
  ``The successive projections algorithm for variable selection in
  spectroscopic multicomponent analysis,'' \emph{Chemometr. Intell. Lab.},
  vol.~57, no.~2, pp. 65--73, 2001.

\bibitem{RC03}
H.~Ren and C.-I. Chang, ``Automatic spectral target recognition in
  hyperspectral imagery,'' \emph{IEEE Trans. on Aerospace and Electronic
  Systems}, vol.~39, no.~4, pp. 1232--1249, 2003.

\bibitem{CM11}
T.-H. Chan, W.-K. Ma, A.~Ambikapathi, and C.-Y. Chi, ``A simplex volume
  maximization framework for hyperspectral endmember extraction,'' \emph{IEEE
  Trans. on Geoscience and Remote Sensing}, vol.~49, no.~11, pp. 4177--4193,
  2011.

\bibitem{Gillis2016fast}
N.~{Gillis} and R.~{Luce}, ``{A Fast Gradient Method for Nonnegative Sparse
  Regression with Self Dictionary},'' \emph{IEEE Transactions on Image
  Processing}, 2017, to appear, doi: 10.1109/TIP.2017.2753400.

\bibitem{de2000multilinear}
L.~De~Lathauwer, B.~De~Moor, and J.~Vandewalle, ``A multilinear singular value
  decomposition,'' \emph{SIAM journal on Matrix Analysis and Applications},
  vol.~21, no.~4, pp. 1253--1278, 2000.

\bibitem{timmerman2002three}
M.~E. Timmerman and H.~A. Kiers, ``Three-way component analysis with smoothness
  constraints,'' \emph{Computational statistics \& data analysis}, vol.~40,
  no.~3, pp. 447--470, 2002.

\bibitem{favier2014overview}
G.~Favier and A.~de~Almeida, ``Overview of constrained {PARAFAC} models,''
  \emph{EURASIP Journal on Advances in Signal Processing}, vol. 2014, no.~1, p.
  142, 2014.

\bibitem{goulart2016tensor}
J.~H.~M. Goulart, M.~Boizard, R.~Boyer, G.~Favier, and P.~Comon, ``Tensor cp
  decomposition with structured factor matrices: Algorithms and performance,''
  \emph{IEEE Journal of Selected Topics in Signal Processing}, vol.~10, no.~4,
  pp. 757--769, 2016.

\bibitem{cohen2017new}
J.~E. Cohen and N.~Gillis, ``A new approach to dictionary-based nonnegative
  matrix factorization,'' in \emph{Signal Processing Conference (EUSIPCO), 2017
  Proceedings of the 25th European}.\hskip 1em plus 0.5em minus 0.4em\relax
  IEEE, 2017, to appear.

\bibitem{domanov2016generic}
I.~Domanov and L.~De~Lathauwer, ``Generic uniqueness of a structured matrix
  factorization and applications in blind source separation,'' \emph{IEEE
  Journal of Selected Topics in Signal Processing}, vol.~10, no.~4, pp.
  701--711, 2016.

\bibitem{roberts1998mapping}
D.~A. Roberts, M.~Gardner, R.~Church, S.~Ustin, G.~Scheer, and R.~Green,
  ``Mapping chaparral in the santa monica mountains using multiple endmember
  spectral mixture models,'' \emph{Remote Sensing of Environment}, vol.~65,
  no.~3, pp. 267--279, 1998.

\bibitem{combe2008analysis}
J.-P. Combe, S.~Le~Mouelic, C.~Sotin, A.~Gendrin, J.~Mustard, L.~Le~Deit,
  P.~Launeau, J.-P. Bibring, B.~Gondet, Y.~Langevin \emph{et~al.}, ``Analysis
  of omega/mars express data hyperspectral data using a multiple-endmember
  linear spectral unmixing model (melsum): Methodology and first results,''
  \emph{Planetary and Space Science}, vol.~56, no.~7, pp. 951--975, 2008.

\bibitem{song2005spectral}
C.~Song, ``Spectral mixture analysis for subpixel vegetation fractions in the
  urban environment: How to incorporate endmember variability?'' \emph{Remote
  Sensing of Environment}, vol.~95, no.~2, pp. 248--263, 2005.

\bibitem{asner2003scale}
G.~P. Asner, M.~M. Bustamante, and A.~R. Townsend, ``Scale dependence of
  biophysical structure in deforested areas bordering the tapajos national
  forest, central amazon,'' \emph{Remote Sensing of Environment}, vol.~87,
  no.~4, pp. 507--520, 2003.

\bibitem{domanov2013study}
I.~Domanov, ``Study of canonical polyadic decomposition of higher-order
  tensors,'' Ph.D. dissertation, KU Leuven, 2013.

\bibitem{de2008tensor}
V.~De~Silva and L.-H. Lim, ``Tensor rank and the ill-posedness of the best
  low-rank approximation problem,'' \emph{SIAM Journal on Matrix Analysis and
  Applications}, vol.~30, no.~3, pp. 1084--1127, 2008.

\bibitem{tropp2004greed}
J.~A. Tropp, ``Greed is good: Algorithmic results for sparse approximation,''
  \emph{IEEE Transactions on Information theory}, vol.~50, no.~10, pp.
  2231--2242, 2004.

\bibitem{tropp2006just}
------, ``Just relax: Convex programming methods for identifying sparse signals
  in noise,'' \emph{IEEE transactions on information theory}, vol.~52, no.~3,
  pp. 1030--1051, 2006.

\bibitem{bro1997fast}
R.~Bro and S.~De~Jong, ``A fast non-negativity-constrained least squares
  algorithm,'' \emph{Journal of chemometrics}, vol.~11, no.~5, pp. 393--401,
  1997.

\bibitem{gillis2012accelerated}
N.~Gillis and F.~Glineur, ``Accelerated multiplicative updates and hierarchical
  als algorithms for nonnegative matrix factorization,'' \emph{Neural
  computation}, vol.~24, no.~4, pp. 1085--1105, 2012.

\bibitem{cichocki2002adaptive}
A.~Cichocki and S.-i. Amari, \emph{Adaptive blind signal and image processing:
  learning algorithms and applications}.\hskip 1em plus 0.5em minus 0.4em\relax
  John Wiley \& Sons, 2002, vol.~1.

\bibitem{kuhn1955hungarian}
H.~W. Kuhn, ``The hungarian method for the assignment problem,'' \emph{Naval
  Research Logistics (NRL)}, vol.~2, no. 1-2, pp. 83--97, 1955.

\bibitem{GS00}
L.~Grippo and M.~Sciandrone, ``On the convergence of the block nonlinear
  {G}auss {S}eidel method under convex constraints,'' \emph{Operations Research
  Letters}, vol.~26, pp. 127--136, 2000.

\bibitem{nesterov1983method}
Y.~Nesterov, ``A method of solving a convex programming problem with
  convergence rate o (1/k2),'' in \emph{Soviet Mathematics Doklady}, vol.~27,
  no.~2, 1983, pp. 372--376.

\bibitem{ghadimi2016accelerated}
S.~Ghadimi and G.~Lan, ``Accelerated gradient methods for nonconvex nonlinear
  and stochastic programming,'' \emph{Mathematical Programming}, vol. 156, no.
  1-2, pp. 59--99, 2016.

\bibitem{royer2011computing}
J.-P. Royer, N.~Thirion-Moreau, and P.~Comon, ``Computing the polyadic
  decomposition of nonnegative third order tensors,'' \emph{Signal Processing},
  vol.~91, no.~9, pp. 2159--2171, 2011.

\bibitem{andersson2000n}
C.~A. Andersson and R.~Bro, ``The {N}-way toolbox for matlab,''
  \emph{Chemometrics and intelligent laboratory systems}, vol.~52, no.~1, pp.
  1--4, 2000.

\bibitem{GDK14}
N.~Gillis, D.~Kuang, and H.~Park, ``Hierarchical clustering of hyperspectral
  images using rank-two nonnegative matrix factorization,'' \emph{{IEEE} Trans.
  Geosci. Remote Sensing}, vol.~53, no.~4, pp. 2066--2078, 2015.

\end{thebibliography}

\vfill


\end{document}